\documentclass{article}
 \usepackage{amsmath} 
     \PassOptionsToPackage{numbers, sort&compress}{natbib}

\usepackage[main, final]{neurips_2026}
\usepackage{tikz} \usetikzlibrary{positioning,arrows.meta,fit,backgrounds,calc,shapes.multipart,shadows}

\usepackage{amsthm}
\newtheorem{proposition}{Proposition}

\usepackage[utf8]{inputenc} 
\usepackage[T1]{fontenc}    
\usepackage{hyperref}       
\usepackage{url}            
\usepackage{booktabs}       
\usepackage{amsfonts}       
\usepackage{nicefrac}       
\usepackage{microtype}      
\usepackage{xcolor}         
\usepackage{booktabs, multirow, graphicx}
\title{Concorde: Geometry-Aware Link Prediction via Decoupled Energy Minimization}

\author{Ege Demirci  \\
Department of Computer Science, UC Santa Barbara  \\ 
Santa Barbara, CA 93106, USA \\
\AND
Ambuj Singh  \\
Department of Computer Science, UC Santa Barbara  \\ 
Santa Barbara, CA 93106, USA \\
}

\begin{document}

\maketitle

\begin{abstract}
Link prediction is difficult in sparse graphs, where diffusion
heuristics lack connectivity and geometric methods suffer spectral
instability when coupling learned representations with diffusion
operators. We introduce \textbf{Concorde}, an energy-based framework
that resolves this by decomposing link likelihood into three
structurally decoupled branches: \textbf{(i)} a parameter-free spectral anchor that establishes permutation-invariant topological stability; \textbf{(ii)} a mixed-curvature residual on a learnable
$\mathbb{H}_{c_h}^{d_h} \times \mathbb{S}_{c_s}^{d_s}$ product
manifold, regularized by a feature-aware Ollivier--Ricci curvature
proxy; and \textbf{(iii)} a continuous vector field on
$\mathcal{T}\mathbb{H}_c^d$ for non-local structural alignment
across topological gaps. The branches are fused via learned gating
and trained with an $N$-pair ranking loss aligned with
retrieval-style evaluation. Across five benchmarks spanning sparse,
dense, featureless, and attributed regimes, \textbf{Concorde}
establishes state-of-the-art early retrieval precision, nearly
doubling $P@10\%$ over leading baselines in sparse settings.
\end{abstract}

\section{Introduction}
Link prediction aims to rank candidate node pairs so that
missing but true edges appear above non-edges~\cite{revisiting}.
Despite strong progress, no single family of methods handles all
structural regimes reliably. Neural approaches, such as GNN-based predictors, subgraph reasoning, and neighborhood-aggregation models, learn flexible link rules from node attributes but degrade sharply under sparsity~\cite{seal,Teru2020,neognn,buddy,ncn}. While classical local overlap heuristics degrade in these regimes due to their strict reliance on dense 2-hop neighborhoods~\cite{newman,adamic,zhou}, global diffusion-style measures remain competitive, encoding multi-hop connectivity without relying on local neighborhood density~\cite{katz,jeh,huang,gelato}.

Geometric deep learning offers a principled route to unifying these
two families. Different graph motifs prefer different curvatures:
tree-like hierarchies are faithfully represented in negatively curved
hyperbolic spaces, while clique-dense subgraphs are better
captured under positive curvature~\cite{poincare,chami,bachmann,gu,
skopek,bai}. Discrete graph curvature (e.g., Ollivier--Ricci)
provides a complementary \emph{intrinsic} descriptor, distinguishing
community edges from bridge-like edges between
regions~\cite{olivier,ye,ni,cgcn}. However, two issues limit the practical impact of geometry
for link prediction. \textbf{(i)} Most geometric methods use a
single manifold and decode distances directly, leaving it unclear
how to couple learned geometry with diffusion operators without
destabilizing the graph's global spectrum~\citep{spectro,prgcn}.
\textbf{(ii)} Most curvature-based methods compute curvature only
on observed edges and do not learn a continuous, feature-driven
similarity that generalizes to unobserved pairs~\citep{curvo,cgcn}.

\paragraph{Contributions:}
\begin{itemize}
    \item \textbf{Spectral decoupling hypothesis.} We prove that a single learned
    adjacency cannot be simultaneously spectrum-preserving (a spectral filter of
    $A$) and $\mathrm{Aut}(G)$-symmetry-breaking (Proposition~\ref{prop:tradeoff}),
    and confirm it directly with a coupled single-matrix control that loses
    $1.8$--$2.5$ AP exactly where the theory predicts
    (Appendix~\ref{app:single_matrix}).
    \item \textbf{A tri-branch energy model.} Concorde decouples link likelihood
    into a parameter-free spectral anchor, a mixed-curvature residual with a
    feature-aware Ollivier--Ricci guardrail, and a non-local hyperbolic steering
    field, fused by learned gates and trained with an $N$-pair ranking loss.
    \item \textbf{State-of-the-art early retrieval.} Across five benchmarks spanning
    sparse/dense and featureless/attributed regimes, Concorde sets a new state of
    the art in early precision, nearly doubling $P@10\%$ over the strongest
    baselines on sparse graphs, with ablations isolating each branch.
\end{itemize}

\definecolor{cA}{HTML}{3B6FB6}   
\definecolor{cR}{HTML}{D9822B}   
\definecolor{cF}{HTML}{2F9468}   
\definecolor{cU}{HTML}{6E55B0}   
\definecolor{cL}{HTML}{C0504D}   
\definecolor{cG}{HTML}{555B66}   

\begin{figure}[t]
\centering
\resizebox{\linewidth}{!}{%
\begin{tikzpicture}[
  font=\small, >=Stealth,
  br/.style={rectangle split, rectangle split parts=2, rounded corners=5pt,
             thick, text width=5.5cm, align=left, inner sep=5pt},
  badge/.style={rounded corners=5pt, thick, fill=white, minimum width=2.4cm,
                minimum height=1.55cm},
  blk/.style={rounded corners=5pt, thick, align=center, inner sep=6pt,
              drop shadow={shadow xshift=0.6pt, shadow yshift=-0.6pt, opacity=0.25}},
  arr/.style={->, thick, cG},
  lab/.style={font=\footnotesize, text=cG, inner sep=2pt}
]

\begin{scope}[shift={(-0.4,0)}]
\draw[rounded corners=5pt, thick, draw=cG, fill=cG!7,
      drop shadow={shadow xshift=0.6pt, shadow yshift=-0.6pt, opacity=0.25}]
     (-1.4,-1.45) rectangle (1.4,1.45);
\begin{scope}[shift={(0,0.5)}]
  \coordinate (n1) at (-0.75,0.45); \coordinate (n2) at (-0.05,0.75);
  \coordinate (n3) at (0.75,0.40);  \coordinate (n4) at (-0.45,-0.25);
  \coordinate (n5) at (0.35,-0.30); \coordinate (n6) at (0.95,-0.35);
  \draw[cG, thick] (n1)--(n2)--(n3)--(n5)--(n4)--(n1) (n2)--(n4) (n3)--(n6);
  \draw[cL, thick, dashed] (n1) to[bend right=20] (n5);
  \foreach \i in {1,...,6}{\fill[cG] (n\i) circle (2.6pt);}
  \node[font=\small\bfseries, text=cL] at (-0.15,-0.66) {?};
\end{scope}
\node[align=center] at (0,-0.78) {\textbf{Observed graph}\\[-1pt]{\footnotesize $A,\;X$ (or LPEs)}};

\end{scope}

\node[br, draw=cA, rectangle split part fill={cA,cA!12}] (b1) at (4.9,3.2)
  {\color{white}\textbf{I \; Invariant Anchor}
   \nodepart{two}
   Diffusion on $A$ only, $M=D^{-1}A$\\[2pt]
   $S_{\text{anchor}}=M^{k}-\mathbf{1}\pi^{\top}$\\[1pt]
   {\scriptsize\itshape parameter-free; global, $\mathrm{Aut}(G)$-invariant}};

\node[br, draw=cR, rectangle split part fill={cR,cR!14}] (b2) at (4.9,0)
  {\color{white}\textbf{II \; Mixed-Curvature Residual}
   \nodepart{two}
   $\mathbb{H}_{c_h}\!\times\!\mathbb{S}_{c_s}$ geodesics $+$ ORC $\kappa$\\[2pt]
   $A_{\text{enh}}=A\odot W_{\text{geo}}$\\[1pt]
   $S_{\text{res}}=M_{\text{geo}}^{k}-\mathbf{1}\pi_{\text{geo}}^{\top}$\\[1pt]
   {\scriptsize\itshape local symmetry-breaking on $\mathrm{supp}(A)$}};

\node[br, draw=cF, rectangle split part fill={cF,cF!12}] (b3) at (4.9,-3.2)
  {\color{white}\textbf{III \; Non-Local Geodesic Field}
   \nodepart{two}
   Steer $q_u$ along $v_u$: $\tilde q_u=\exp_{q_u}(v_u)$\\[2pt]
   $S_{\text{field}}=-d_{\mathbb{H}_c}(\tilde q_u,k_v)$\\[1pt]
   {\scriptsize\itshape bridges topological gaps}};

\node[badge, draw=cA] (i1) at (9.5,3.2) {};
\begin{scope}[shift={(9.5,3.2)}, scale=0.8]
  \coordinate (a1) at (-0.5,0.3); \coordinate (a2) at (0.0,0.5);
  \coordinate (a3) at (0.5,0.2);  \coordinate (a4) at (-0.2,-0.4);
  \coordinate (a5) at (0.35,-0.35);
  \draw[cA, thick] (a1)--(a2)--(a3)--(a5)--(a4)--(a1) (a2)--(a4);
  \foreach \i in {1,...,5}{\fill[cA] (a\i) circle (2.4pt);}
\end{scope}
\node[badge, draw=cR] (i2) at (9.5,0) {};
\begin{scope}[shift={(9.02,0)}]
  \draw[cR, thick, fill=cR!8] (0,0) circle (0.52);
  \foreach \a in {90,210,330}{
    \draw[cR, thick] (0,0) -- (\a:0.2);
    \foreach \o/\b in {-27/14, 27/-14}{
      \draw[cR, thick] (\a:0.2) to[bend left=\b] ({\a+\o}:0.43);
      \fill[cR] ({\a+\o}:0.43) circle (1.3pt);
    }
    \fill[cR] (\a:0.2) circle (1.6pt);
  }
  \fill[cR!70!black] (0,0) circle (2pt);
\end{scope}
\begin{scope}[shift={(10.12,0)}]
  \draw[cR, thick, fill=cR!8] (0,0) circle (0.38);
  \draw[cR, thick] (-0.38,0) arc (180:360:0.38 and 0.12);
  \draw[cR, thick, dashed] (-0.38,0) arc (180:0:0.38 and 0.12);
  \draw[cR, thick] (0,0) ellipse (0.15 and 0.38);
\end{scope}
\node[badge, draw=cF] (i3) at (9.5,-3.2) {};
\begin{scope}[shift={(9.5,-3.2)}]
  \draw[cF, thick, fill=cF!8] (0,0) circle (0.58);
  \fill[cF] (-0.3,-0.25) circle (2.4pt);
  \draw[cF, thick, ->] (-0.3,-0.25) -- (0.02,0.12);
  \draw[cF, thick, dashed] (0.02,0.12) to[bend left=25] (0.32,0.33);
  \fill[cF!70!black] (0.32,0.33) circle (2.4pt);
\end{scope}

\node[blk, draw=cU, fill=cU!8, text width=3.35cm] (fu) at (14.9,0)
  {\textbf{Gated superposition}\\[3pt]
   $\begin{aligned}S&=S_{\text{res}}+\lambda_{\text{a}}S_{\text{anchor}}\\[-1pt]&\phantom{=}\;+\lambda_{\text{f}}S_{\text{field}}\end{aligned}$\\[3pt]
   {\footnotesize symmetrize $\cdot$ $P(A_{uv}{=}1)=\sigma(-E)$}};

\node[blk, draw=cG, fill=cG!7, text width=2.7cm] (out) at (18.6,0)
  {\textbf{Ranked edges}\\[3pt]
   \begin{tikzpicture}[baseline=0pt]
     \foreach \w/\c [count=\i from 0] in {1.7/cF, 1.4/cF, 1.2/cF!45, 0.9/cG!35, 0.7/cG!35}{
       \fill[\c, rounded corners=1pt] (0,-0.17*\i) rectangle (\w*0.9,-0.17*\i+0.12);}
   \end{tikzpicture}\\[1pt]
   {\footnotesize $P@10\%,\;P@20\%$, AP}};

\node[blk, draw=cL, fill=cL!7, text width=3.35cm] (loss) at (14.9,-2.7)
  {\textbf{$N$-pair ranking loss}\\[1pt]
   {\footnotesize exhaustive negatives $\mathcal{E}^-$;\\ positives masked from $A$}};

\draw[cG, thick] (1.0,0) -- (1.4,0);
\draw[cG, thick] (1.4,3.2) -- (1.4,-3.2);
\draw[arr] (1.4,3.2)  -- (b1.west);
\draw[arr] (1.4,0)    -- (b2.west);
\draw[arr] (1.4,-3.2) -- (b3.west);

\draw[arr] (b1.east) -- (i1.west);
\draw[arr] (b2.east) -- (i2.west);
\draw[arr] (b3.east) -- (i3.west);

\draw[cA, thick] (i1.east) -- node[lab,above,pos=.62]{$\times\lambda_{\text{a}}$} (12.3,3.2) -- (12.3,0);
\draw[cR, thick] (i2.east) -- node[lab,above,pos=.62]{$\times 1$} (12.3,0);
\draw[cF, thick] (i3.east) -- node[lab,above,pos=.62]{$\times\lambda_{\text{f}}$} (12.3,-3.2) -- (12.3,0);
\draw[arr] (12.3,0) -- (fu.west);
\fill[cG] (12.3,0) circle (2pt);

\draw[arr] (fu.east) -- (out.west);
\draw[arr, dashed, cL] (loss.north) -- (fu.south);

\begin{scope}[on background layer]
  \node[draw=cG!60, dashed, rounded corners=8pt, fill=cG!3, inner sep=9pt,
        fit=(b1)(b2)(b3)(i1)(i2)(i3)] (grp) {};
\end{scope}
\node[font=\footnotesize, text=cG, anchor=south] at ([yshift=1pt]grp.north)
  {\textbf{Structurally decoupled energy branches} --- the anchor's spectrum is never modified by learned branches};

\end{tikzpicture}%
}
\caption{\textbf{Overview of Concorde.} Link likelihood is decomposed into three
structurally decoupled energy branches: a parameter-free spectral anchor on the
unperturbed adjacency (global, $\mathrm{Aut}(G)$-invariant), a mixed-curvature
residual on $\mathbb{H}\times\mathbb{S}$ with a feature-aware Ollivier--Ricci
guardrail (local symmetry-breaking), and a non-local geodesic steering field on
$\mathcal{T}\mathbb{H}_c^d$ (bridging topological gaps). Learned scalar gates fuse
the branches, and the model is trained with an $N$-pair ranking loss aligned with
early-retrieval evaluation.}
\label{fig:overview}
\end{figure}

\subsection{Related Work}
Classical link predictors based on neighborhood overlap, such as Common
Neighbors, Adamic--Adar, and Resource Allocation, remain strong
baselines due to their direct encoding of triadic closure~\cite{newman,
adamic,zhou}. Global diffusion measures such as Katz~\cite{katz} and
SimRank~\cite{jeh} exploit multi-hop connectivity via path-based
similarity. Autocovariance (AC) captures random-walk co-visitation
structure and can outperform learned GNN baselines on sparse
graphs~\cite{huang}. Gelato refines a diffusion heuristic with a
light attribute-driven learner and ranking objective, demonstrating
that heuristic reweighting is extremely effective under heavy
imbalance~\cite{gelato}.

A second line of work performs edge scoring via explicit local-structure
reasoning. SEAL popularized enclosing-subgraph classification; subsequent methods improved scalability via neighborhood sketching~\citep{buddy} or modeled explicit common-neighbor structure~\citep{neognn}. NCN/NCNC incorporate common-neighbor
structure with false-negative completion~\citep{ncn}, and OCN
further refines these signals via orthogonalization to address
redundancy and over-smoothing~\citep{ocn}.

Geometric representation learning embeds nodes in non-Euclidean
spaces to match global graph structure. Hyperbolic embeddings
compactly model hierarchies via exponential volume
growth~\cite{poincare}, and hyperbolic GNNs extend message passing
to these manifolds~\cite{chami}. Unified constant-curvature
frameworks~\cite{bachmann} and mixed-curvature product
spaces~\cite{gu,skopek} capture multiple structural motifs
simultaneously, though most rely on single decoders and do not
address spectral brittleness when coupling learned geometry to
diffusion operators. More recent work combines mixed-curvature geometry with
global attention in product-space transformers, or learns per-component
curvature importance rather than fixing it  \textit{a priori} ~\cite{gu,skopek,dwi}.
Concorde differs from existing literature in two ways: its non-local branch replaces Euclidean
self-attention with a \emph{geodesic} steering field native to the hyperbolic
manifold (Section~\ref{sec:branch_3}), and its learned gates weight
\emph{structurally decoupled} energy branches rather than reweighting components
inside one representation, which is precisely what preserves the anchor spectrum
(Proposition~\ref{prop:anchor_stability}).

Furthermore, discrete Ricci curvature characterizes local graph
``shape,'' distinguishing community edges from topological
bridges~\cite{olivier,ni}. Curvature Graph Networks adapt edge
weights via Ollivier--Ricci curvature during message
passing~\cite{ye}, discrete curvature has been used to characterize
over-squashing and bottlenecks in message passing~\cite{topping2021},
and curvature has been used as an auxiliary
signal in applied link prediction~\cite{cgcn}. These methods
provide valuable intrinsic descriptors but do not learn
task-adaptive similarity from node features or integrate with
diffusion-based pipelines.

\section{Methodology}

We introduce \textbf{Concorde}, an energy-based framework that decouples link likelihood into three structurally restricted branches, each targeting a distinct functional regime.

Let $G=(V,E)$ be an undirected graph with adjacency $A \in \mathbb{R}^{N \times N}$ and features $X \in \mathbb{R}^{N \times d}$; link prediction scores each pair $(u,v)$ by a structural similarity $S(u,v)$~\citep{revisiting}. Rather than deriving $S$ from a distance decoder in a single latent space \cite{hoff02, sarkar, poincare}, we treat the negative similarity as a \textit{geometric energy} $E(u,v)$ and model link formation as a Boltzmann distribution with a logistic link:
\begin{equation}
    P(A_{uv} = 1) = \sigma(-E(u,v)).
\end{equation}

Low energy thus corresponds to a high-probability missing link. Standard learned methods minimize this energy within a single representation space or enhanced adjacency $A_{\text{learn}}$~\cite{chen,franchesci}. We argue this forces an unavoidable spectral trade-off between two opposing group-theoretic objectives: \textbf{global invariance} (spectral filters on $A$ are permutation-invariant w.r.t.\ $\mathrm{Aut}(G)$) and \textbf{local equivariance} (detecting missing links requires distinguishing structurally symmetric candidates via localized geometric deformations).

We formalize this as \textbf{a spectral decoupling hypothesis}. If the single
learned operator $A_{\mathrm{learn}}$ acts as a \emph{spectral filter}
$\sum_k c_k A^k$ of $A$, as every diffusion operator (including the anchor's
$M^k$) does, it is automatically $\mathrm{Aut}(G)$-equivariant, the standard
condition satisfied by any message-passing GNN~\cite{xu2019,maron}, and assigns
identical scores within each $\mathrm{Aut}(G)$-orbit. The symmetry-breaking that
link prediction requires is thus impossible without leaving this spectral class
and distorting the spectrum (Proposition~\ref{prop:tradeoff},
Appendix~\ref{app:spectral_theory}). The
trade-off is thus structurally generic, not a pathological case. Because
informative node features can partially break $\mathrm{Aut}(G)$-symmetry,
Proposition~\ref{prop:tradeoff} characterizes the \textit{structural}
component of the learned signal: the impossibility is most binding where
features are weak or absent relative to topology. Concorde abandons the
single-matrix paradigm accordingly, a choice supported by the theory
(Appendix~\ref{app:spectral_theory}), by the performance collapse under
ablation (Section~\ref{sec:macro_ablation}), and by a direct coupled
single-matrix experiment reproducing exactly the regime
Proposition~\ref{prop:tradeoff} argues against
(Appendix~\ref{app:single_matrix}). We define the total geometric
energy $E(u,v)$ as a decoupled superposition of three distinct potentials,
each restricted to a specific functional space. By setting the final
pairwise score $S(u,v)$ to the negative of this joint energy, Concorde
computes link probabilities through a Tri-Branch architecture:

\begin{equation}
    S(u,v) = -E(u,v) = S_{\text{anchor}}(u,v) + S_{\text{res}}(u,v) + S_{\text{field}}(u,v).
\end{equation}

The three branches are: (I) a parameter-free spectral anchor that captures global topological invariance; (II) a mixed-curvature residual on a learnable $\mathbb{H}_{c_h}^{d_h} \times \mathbb{S}_{c_s}^{d_s}$ product manifold for local geometric symmetry-breaking; and (III) a continuous vector field on $\mathcal{T}\mathbb{H}_c^d$ for non-local structural alignment across topological gaps. We discuss the three branches next.

\subsection{Branch I: Global Topological Energy (The Invariant Anchor)}
\label{sec:branch_1}

The first branch of the Concorde architecture establishes the macroscopic stability of the model. In multiscale geometric frameworks, stability is typically maintained by proving that a hierarchy of coarsened graphs remains quasi-isometric to the original topology \cite{loukas,zou}. Concorde, however, achieves this stability natively in a single scale by strictly isolating the macroscopic topology into an invariant anchor. This branch minimizes the \textit{global topological energy}, measuring structural affinity between nodes from the observed network infrastructure alone. Graph automorphisms,  permutations preserving adjacency, act as linear transformations on the disjoint eigenspaces of $A$; consequently, any parameter-free spectral function of $A$ is invariant to these symmetries by construction.

Let $D$ be the diagonal degree matrix of the training graph where $D_{uu} = \sum_{v} A_{uv}$, and define the row-stochastic transition matrix $M = D^{-1}A$. To evaluate the structural energy between nodes $u$ and $v$ over a receptive field of $k$ hops, we model the multi-step diffusion process. Let $\pi \propto D\mathbf{1}$ represent the stationary distribution column vector of $M$, and let $\mathbf{1}$ be the all-ones column vector. The invariant anchor score matrix is formally defined as a
degree-normalized diffusion residual:
\begin{equation}
    S_{\text{anchor}} = M^k - \mathbf{1}\pi^T.
\end{equation}

Each entry $[S_{\text{anchor}}]_{uv} = [M^k]_{uv} - \pi_v$ measures the
\emph{excess} $k$-step reachability of $v$ from $u$ above the stationary
baseline $\pi_v$. This is related to, but distinct from, the standard
Autocovariance (AC) heuristic of~\citet{huang}, which uses the
degree-weighted form $\Pi M^k - \pi\pi^T$ where $\Pi = \text{diag}(\pi)$.
Our row-stochastic formulation omits the $\Pi$ pre-weighting
deliberately: the $\Pi$-weighted form amplifies hub nodes in
proportion to their stationary probability, introducing precisely
the popularity bias $S_{\text{anchor}}$ is designed to suppress. The next two branches supply the fine-grained symmetry-breaking signal that this parameter-free anchor cannot provide by construction.

\textbf{Well-posedness on real graphs:} A unique stationary $\pi$ requires the
walk to be aperiodic; we ensure this with self-loops (a standard lazy-walk
device, enabled on the datasets most prone to periodicity), which also give
dangling low-degree nodes well-defined stationary mass. On disconnected graphs,
$M = D^{-1}A$ is block-diagonal, so Branch~I operates independently within each
connected component ($\pi \propto D\mathbf{1}$ is well-defined per block) and
never scores cross-component pairs.

\subsection{Branch II: Local Geometric Energy (Mixed-Curvature Residual)}
\label{sec:branch_2}

While the invariant anchor stabilizes the macroscopic ranking, it is blind to
localized symmetry-breaking: nodes in the same structural orbit receive
identical scores, yet link prediction hinges on detecting localized
perturbations such as a closing triangle or a missing cross-community bridge.
Concorde captures these high-frequency corrections without destabilizing the
global spectrum through a \textit{local geometric energy} branch on a
mixed-curvature manifold.

Real-world graphs are geometrically heterogeneous: tree-like regions embed best
in negative curvature, while cyclic or clique-dense subgraphs favor positive
curvature~\cite{bronstein, sala}, and no single Euclidean or purely hyperbolic
space accommodates both without distortion~\cite{gu}. We therefore map node
features into a product manifold $\mathcal{M} = \mathbb{H}_{c_h}^{d_h} \times
\mathbb{S}_{c_s}^{d_s}$ with learnable curvatures $c_h < 0$ and $c_s > 0$, and
compute coordinate-free geodesic distances $d_{\mathbb{H}}(u,v)$ and
$d_{\mathbb{S}}(u,v)$ on each component. Learning $c_h, c_s$ end-to-end lets the
model scale each component to the dataset's intrinsic geometry rather than
fixing it \textit{a priori}.

Geodesic proximity is necessary but not sufficient: distances alone cannot
separate a coherent edge from a spurious bridge between incompatible
neighborhoods. We therefore add a feature-aware Ollivier--Ricci curvature (ORC)
proxy $\kappa(u,v)$ measuring neighborhood alignment, with positive curvature
indicating community cohesion and negative curvature a topological bottleneck or
bridge.

Formally, we define lazy random-walk measures $\mu_u$ and $\mu_v$
over the one-hop neighborhoods of $u$ and $v$ respectively. Each
measure assigns a retention mass $\alpha \in (0,1)$ to the node
itself and distributes the remaining $1-\alpha$ uniformly over its
immediate neighbors $\mathcal{N}(\cdot)$:
\begin{equation}
    \mu_u(x) = \alpha \cdot \mathbf{1}[x = u] \;+\;
    \frac{1-\alpha}{|\mathcal{N}(u)|}\sum_{w \in \mathcal{N}(u)}
    \mathbf{1}[x = w].
\end{equation}
The curvature is then defined via optimal transport between these
localized distributions:
\begin{equation}
    \kappa(u,v) = 1 - \mathcal{W}_1(\mu_u, \mu_v),
\end{equation}
where $\mathcal{W}_1$ denotes the Wasserstein-1 distance. To ensure scalability to large graphs, this transport cost is approximated via the highly parallelizable entropy-regularized Sinkhorn algorithm \cite{sinkhorn} and precomputed for candidate batches. For $\kappa(u,v)$ to reflect both structural and semantic compatibility, the ground metric must account for more than hop distance.
We define a joint ground metric $C(a,b)$ as a convex combination of topological and semantic dissimilarity:
\begin{equation}
C(a,b) = \frac{1}{2}d_{\text{topo}}(a,b) + \frac{1}{2}(1 - \text{cos\_sim}(X_a, X_b)), \quad \text{where } d_{\text{topo}}(a,b) =
\begin{cases}
0 & \text{if } a = b \\
1 & \text{if } (a,b) \in E \\
2 & \text{otherwise}
\end{cases}.
\end{equation}
This coupling ensures that the transport cost, and by extension
$\kappa(u,v)$, penalizes pairs whose neighborhoods are
geometrically proximate yet semantically incompatible, providing
an intrinsic check against structurally implausible link predictions. We fuse the extrinsic manifold distances and the intrinsic ORC guardrail via a deep neural function $\mathcal{F}_\theta$ to compute a local structural modulator:

Let $\bar{d}_{\mathbb{H}}(u)$ and $\bar{d}_{\mathbb{S}}(u)$ be the mean geodesic distances from $u$ to its observed neighbors, encoding local manifold density. The structural modulator $W_{\text{geo}} \in \mathbb{R}^{N \times N}$ has entries:
\begin{equation}
[W_{\text{geo}}]_{uv} = \sigma\Big(\mathcal{F}_\theta\big(
\tfrac{X_u + X_v}{2} \oplus |X_u - X_v|,\;
d_{\mathbb{H}}(u,v),\; d_{\mathbb{S}}(u,v),\; \kappa(u,v),\;
\bar{d}_{\mathbb{H}}(u),\; \bar{d}_{\mathbb{H}}(v),\;
\bar{d}_{\mathbb{S}}(u),\; \bar{d}_{\mathbb{S}}(v)
\big)\Big).
\end{equation}

Here $\tfrac{X_u+X_v}{2}$ and $|X_u - X_v|$ decompose the pair into symmetric and difference feature components. In strictly featureless regimes (e.g., OGBL-DDI), we substitute the absent node attributes $X$ with Laplacian Positional Encodings (LPEs) to provide initial structural coordinates. Because the graph is undirected, the resulting pairwise relationships must be perfectly symmetric. We therefore explicitly symmetrize the structural modulator matrix:
\begin{equation}
    W_{\text{geo}} \leftarrow \frac{1}{2}\left(W_{\text{geo}} + W_{\text{geo}}^\top\right).
\end{equation}

This symmetrized, bounded affinity matrix acts as a localized, equivariant deformation of the graph space. We apply this modulator to reweight the observed topology via the Hadamard product, constructing an enhanced adjacency matrix $A_{\text{enh}} = A \odot W_{\text{geo}}$.

Defining $D_{\text{enh}}$ as the corresponding diagonal degree matrix where $[D_{\text{enh}}]_{uu} = \sum_v [A_{\text{enh}}]_{uv}$, we construct the enhanced transition matrix $M_{\text{geo}} = D_{\text{enh}}^{-1}A_{\text{enh}}$. This dynamically injects high-frequency structural cues into the transition probabilities while strictly preserving the sparsity of the original graph. The residual energy score is then extracted via diffusion over this augmented space (with $\pi_{\text{geo}}$ as its stationary distribution):
\begin{equation}
    S_{\text{res}} = M_{\text{geo}}^k - \mathbf{1}\pi_{\text{geo}}^T.
\end{equation}

The Hadamard form $A \odot W_{\text{geo}}$ amplifies structurally consistent edges and dampens spurious ones while keeping $O(|\mathcal{E}|)$ cost. Adding the unperturbed anchor ($S_{\text{anchor}} + S_{\text{res}}$) acts as a stability regularizer: the global ranking cannot fall below the baseline topological performance. This enforces a separation of concerns, the anchor supplies low-frequency global topology, the residual applies feature-adaptive local reweighting confined to the sparsity support of $A$, and Branch~III resolves unobserved gaps.

\subsection{Branch III: Non-Local Field Energy (Contextual Geodesic Steering)}
\label{sec:branch_3}

Branches I and II capture structure within connected subgraphs, but diffusion breaks down under extreme sparsity: two nodes may share identical structural orbits yet be entirely disconnected. Standard attention bridges this via pairwise affinities, but its Euclidean inner products ($QK^T$) are geometrically inconsistent with the curved manifold of Branch~II~\cite{veli, gulcehre, dwi}. Concorde instead learns a \textit{continuous feature-to-tangent map} $\mathcal{V}: \mathbb{R}^d \rightarrow \mathcal{T}\mathbb{H}_c^d$ that assigns each query node a tangent vector pointing toward its most probable missing link, steering natively on the manifold without neighborhood traversal or costly parallel transport.

First, we project the node features into the tangent space of the origin
and lift them to the Lorentz model to form query and key representations
$q_u, k_v \in \mathbb{H}_c^d$. In featureless settings, the same
Laplacian Positional Encodings (LPEs) used in Branch II serve as the
input to the feature-to-tangent map $\mathcal{F}_{\text{steer}}$;
crucially, since LPEs are precomputed on the full training graph and
fixed throughout training, per-batch edge masking does not require
their recomputation, keeping the overhead of Branch III identical
across attributed and featureless regimes. For each node $u$, the network generates a specific tangent steering vector:
\begin{equation}
    v_u = \text{proj}_{q_u}\big(\mathcal{F}_{\text{steer}}(X_u)\big) \in \mathcal{T}_{q_u}\mathbb{H}_c^d,
\end{equation}
where the projection operator ensures the steering vector lies strictly in the tangent space of $q_u$ (i.e., the Minkowski inner product $\langle q_u, v_u \rangle_{\mathcal{L}} = 0$). To evaluate the structural alignment between the query's intention and the candidate key, we utilize the Lorentz Exponential Map. This operator transports the query node along the exact geodesic trajectory defined by the semantic steering vector (using $|c_h|$ to denote the magnitude of the negative curvature):
\begin{equation}
    \tilde{q}_u = \exp_{q_u}(v_u) = \cosh(\sqrt{|c_h|}||v_u||_{\mathcal{L}})q_u + \frac{\sinh(\sqrt{|c_h|}||v_u||_{\mathcal{L}})}{\sqrt{|c_h|}||v_u||_{\mathcal{L}}}v_u.
\end{equation}

Finally, the \textit{Non-Local Field Energy} is computed as the geodesic distance between the semantically steered query and the key across the manifold. We convert this energy into an attention-based similarity score:
\begin{equation}
S_{\text{field}}(u,v) = -d_{\mathbb{H}_c}(\tilde{q}_u, k_v).
\end{equation}

A lower geodesic distance between the steered query $\tilde{q}_u$
and the key $k_v$ corresponds to a higher structural affinity, so
the negative distance serves directly as a pairwise similarity score. To capture multi-faceted structural signals, we implement this mechanism using a multi-head attention architecture, where each head learns a distinct steering field and an independent curvature constant (separate from the base manifold in Branch II), which are then fused via a learnable weighted sum.

\textbf{Directedness and parameterization:} The query and key representations are
produced by an \emph{independent} feature-to-tangent lift, not shared with the
Branch~II encoder, leaving the steering field free to specialize for non-local
alignment. The score $S_{\text{field}}(u,v) = -d_{\mathbb{H}_c}(\tilde{q}_u, k_v)$
is deliberately directed, it steers a query toward its most probable partner and
is not symmetric in $(u,v)$. Because the target graph is undirected, symmetry is
restored at fusion via $S(u,v)\leftarrow\tfrac{1}{2}(S(u,v)+S(v,u))$
(Section~\ref{sec:optimization}): the directed field enriches the hypothesis
space during scoring while the final energy stays permutation-invariant.

\subsection{Optimization \& Multi-hop Superposition}
\label{sec:optimization}
To compute the final link likelihood, Concorde integrates
the three decoupled energy branches into a unified scoring function.
Rather than fixing the relative contribution of each branch, we learn
a dynamic superposition via end-to-end gating:
\begin{equation}
    S(u,v) = S_{\text{res}}(u,v) + \lambda_{\text{anchor}} \,
    S_{\text{anchor}}(u,v) + \lambda_{\text{field}} \,
    S_{\text{field}}(u,v),
\end{equation}
where $\lambda_{\text{anchor}}, \lambda_{\text{field}} \in (0,1)$ are
sigmoid-parameterized scalar gates optimized end-to-end and $S_{\text{res}}$ is
fixed at coefficient $1$ as the baseline energy scale. Since the graph is
undirected, we symmetrize the final score,
$S(u,v) \leftarrow \tfrac{1}{2}(S(u,v) + S(v,u))$, resolving the asymmetry in
Branch~I's diffusion residual and Branch~III's directed steering. The gates let
the model adapt to dataset sparsity and homophily: dense graphs upweight
$S_{\text{anchor}}$, sparse or heterophilic graphs upweight $S_{\text{field}}$.
Empirically the learned gates recover this dependence without supervision,
$\lambda_{\text{field}}$ rising monotonically as the graph sparsifies ($0.10$ on
dense OGBL-DDI to $0.63$ on sparse Power), and outperform both equal and
oracle-tuned fixed gates (Appendix~\ref{app:extra_ablations}).

In sparse completion, non-edges grow quadratically with $|V|$, so calibrated
binary classification is ill-posed and cross-entropy conflates ranking with
calibration. Since our target is retrieval, a ranked shortlist of candidate
edges, only the \textit{relative ordering} of scores matters. We therefore
supervise ordering directly with an $N$-pair ranking loss over true positives
$\mathcal{E}^+$ and the exhaustive complement of unobserved negatives
$\mathcal{E}^-$ (no subsampling):
\begin{equation}
    \mathcal{L} = \frac{1}{|\mathcal{E}^+|} \sum_{(u,v) \in \mathcal{E}^+}
    \log \Bigg( 1 + \sum_{(u',v') \in \mathcal{E}^-}
    \exp\Big(S(u',v') - S(u,v)\Big) \Bigg).
\end{equation}
The exhaustive negative complement is made memory-feasible via graph
partitioning and chunked evaluation (Appendix~\ref{app:scalability}). Penalizing
any negative that scores above a true positive directly optimizes early-retrieval
precision ($P@10\%$, $P@20\%$). To prevent structural leakage, we mask each
mini-batch's positive edges $\mathcal{E}^+$ from $A$ before constructing the
anchor and residual, so the diffusion branches infer link likelihood only from
surrounding context. This masking is exact for the diffusion operators; on
featureless graphs the fixed LPE inputs are exempt, but their residual leakage
is bounded (Davis--Kahan) and empirically negligible ($\leq 0.44$ AP;
Appendices~\ref{app:eval_protocol},~\ref{app:extra_ablations}).

\section{Experiments}
\label{sec:experiments}

We evaluate Concorde on five benchmark datasets spanning
distinct structural and semantic regimes, following the taxonomy
of~\citet{revisiting}. \textit{Cora}, \textit{CiteSeer}~\cite{cora},
and \textit{PubMed}~\cite{pubmed} are citation networks in which both
global structural proximity and bag-of-words node features are
informative, providing a rich testbed for methods that integrate
structural and semantic signals. \textit{OGBL-DDI}~\cite{ogb} is a
dense, featureless drug-drug interaction network governed exclusively
by local neighborhood overlap. \textit{Power}~\cite{power}
is an electrical grid topology at the opposite extreme: extremely
sparse, featureless, and dominated by global path-based signals
(e.g., Katz index) due to its low mean degree. These five are chosen to span the
structural taxonomy along two axes, density (sparse Power to dense OGBL-DDI) and
feature availability (featureless to attributed), so that each branch is stressed
in isolation; OGBL-DDI's ${\sim}1.3$M edges additionally test scale, and
Concorde's $O(k|E|)$ per-step cost (Appendix~\ref{app:scalability}) carries the
identical protocol to larger graphs.

We compare against a comprehensive set of link prediction baselines. Enclosing subgraph methods include SEAL~\cite{seal}, which learns structural heuristics from localized subgraphs, and BUDDY~\cite{buddy}, a scalable subgraph-sketching alternative. Neighborhood overlap methods, including Neo-GNN~\cite{neognn}, NCN, NCNC~\cite{ncn}, and the previous state-of-the-art OCNP~\cite{ocn}, explicitly incorporate common-neighbor counts and higher-order overlaps on dense, triadic-closure-heavy graphs. We additionally include HGCN~\cite{chami}, a hyperbolic GNN with Fermi-Dirac link decoder, as a single-manifold geometric baseline against which the product manifold design of Branch~II is directly contrasted. Finally, representing the previous state-of-the-art among diffusion-based methods, Gelato~\cite{gelato} frames link prediction as an imbalanced retrieval problem, augmenting topological heuristics with attribute-based graph learning. We report \textbf{Average Precision (AP)} as a measure of global ranking quality, with primary emphasis on \textbf{Early Precision ($P@10\%$, $P@20\%$)}. We emphasize early precision because retrieval under extreme class imbalance is governed by the density of true edges at the very top of a \emph{single global ranking} over the entire non-edge complement, which is what $P@r\%$ measures with a denominator held identical across methods, rather than by the sampled-negative shortlists that Hits@$K$/MRR are typically computed over; our exhaustive full-negative evaluation already imposes the hard-negative pressure that per-source protocols such as HeaRT approximate, and AP is reported throughout for global calibration (Appendix~\ref{app:eval_protocol}). All baselines are run by us under the \emph{identical} unbiased full-negative protocol and hardware; entries marked OOM or exceeding the time budget reflect a method's cost under this harder protocol rather than selective omission, as the per-method runtime and peak-memory tables in Appendix~\ref{app:scalability} make explicit.

\subsection{Main Results: Breaking the Sparsity Bottleneck}
\label{sec:main_results}

\begin{table}[htbp]
\centering
\caption{Link prediction performance comparison across five benchmark datasets. We report Early Precision ($P@10\%$, $P@20\%$) and Average Precision (AP) using an unbiased evaluation protocol. Results show mean $\pm$ standard deviation over 10 runs, except where noted. ($^{*}$) Run only once due to
runtime $>24$ hrs; OOM: Out-of-memory; ($^{**}$) Each run takes $>1000$ hrs \cite{gelato}}
\label{tab:main_results}
\scriptsize
{%
\begin{tabular}{ll ccccc}
\toprule
Metric & Model & Cora & CiteSeer & PubMed & OGBL-DDI & Power \\
\midrule
\multirow{8}{*}{P@10}
 & SEAL     & 6.52 $\pm$ 1.08
 & 5.35 $\pm$ 1.12 & $^{**}$ & 37.45 $\pm$ 1.15 & 11.34 $\pm$ 1.28 \\
 & HGCN &  6.53 $\pm$ 1.11 & 7.25 $\pm$ 1.23 & 11.10 $\pm$ 1.25 & 27.32 $\pm$ 1.93 & 18.28 $\pm$ 1.19 \\
 & Neo-GNN  & 6.05 $\pm$ 0.32 & 6.23 $\pm$ 0.85 & 12.32 $\pm$ 0.74 & 25.80* & 10.74 $\pm$ 0.39 \\
 & BUDDY    & 7.23 $\pm$ 0.45 & 7.93 $\pm$ 0.80 & OOM & 22.59 $\pm$ 2.02 & 12.53 $\pm$ 0.45 \\
 & NCN      & 10.12 $\pm$ 1.08 & 10.36 $\pm$ 0.32 & 3.23 $\pm$ 1.09 & 45.30 $\pm$ 0.15 & 14.77 $\pm$ 0.61 \\
 & NCNC     & 8.78 $\pm$ 0.58 & 10.84 $\pm$ 0.34 & 14.56 $\pm$ 0.89 & 45.22 $\pm$ 0.11 & 13.42 $\pm$ 0.31 \\
 & OCNP     & 9.30 $\pm$ 0.32 & 12.76 $\pm$ 0.27 & 17.38 $\pm$ 0.12 & 46.12 $\pm$ 0.13 & 20.60 $\pm$ 0.60 \\
 & Gelato   & 13.21 $\pm$ 1.33  &  8.70 $\pm$ 1.05 & 22.07 $\pm$ 0.12 & 33.34 $\pm$ 0.11 & 24.85 $\pm$ 4.49 \\
 & Concorde & \textbf{25.28 $\pm$ 1.92} & \textbf{20.00 $\pm$ 1.58} & \textbf{23.69 $\pm$ 0.15} & \textbf{46.45 $\pm$ 0.20} & \textbf{27.58 $\pm$ 2.78} \\
\midrule
\multirow{8}{*}{P@20}
 & SEAL     & 5.57$ \pm$ 0.98 & 5.08 $\pm$ 0.87 & $^{**}$ & 36.23 $\pm$ 1.30 & 10.12 $\pm$ 0.38 \\
  & HGCN &  5.48 $\pm$ 1.20 & 6.88 $\pm$ 0.92 & 10.53 $\pm$ 0.97 & 26.60 $\pm$ 2.13 & 17.72 $\pm$ 0.98 \\
 & Neo-GNN  & 5.40 $\pm$ 0.45 & 6.12 $\pm$ 0.37 & 10.54 $\pm$ 0.67 & 24.28* &  11.11 $\pm$ 0.25 \\
 & BUDDY    & 6.23 $\pm$ 0.71 & 7.15 $\pm$ 0.32 & OOM & 20.34 $\pm$ 1.15 &  12.38 $\pm$ 0.48 \\
 & NCN      & 9.08 $\pm$ 0.46 & 9.71 $\pm$ 0.38 & 2.21 $\pm$ 1.84 & 44.30 $\pm$ 0.08 &  13.77 $\pm$ 0.61 \\
 & NCNC     & 7.34 $\pm$ 0.48 & 9.12 $\pm$ 0.75 & 11.28 $\pm$ 0.82 & 44.85 $\pm$ 0.09 &  14.45 $\pm$ 0.87 \\
 & OCNP     & 8.84 $\pm$ 0.23 & 10.11 $\pm$ 0.43 & 13.21 $\pm$ 0.24 & 46.23 $\pm$ 0.08 &  19.91 $\pm$ 1.15 \\
 & Gelato   & 15.47 $\pm$ 0.52 & 14.75 $\pm$ 0.62 & 16.80 $\pm$ 0.24 & 32.25 $\pm$ 0.07 & 23.48 $\pm$ 2.57 \\
 & Concorde & \textbf{20.00 $\pm$ 0.71} & \textbf{16.48 $\pm$ 0.69} & \textbf{18.92 $\pm$ 0.12} & \textbf{46.48 $\pm$ 0.08} & \textbf{24.24 $\pm$ 1.36} \\
\midrule
\multirow{8}{*}{AP}
 & SEAL     & 2.27 $\pm$ 0.45 & 2.52 $\pm$ 0.22 & $^{**}$ & 35.20 $\pm$ 1.06 & 2.39 $\pm$ 0.12  \\
   & HGCN &  2.53 $\pm$ 0.27 & 2.77 $\pm$ 0.38 & 1.84 $\pm$ 0.21 & 22.28 $\pm$ 2.07 & 3.87 $\pm$ 0.49 \\
 & Neo-GNN  & 2.08 $\pm$ 1.68 & 1.68 $\pm$ 0.09 & 1.31 $\pm$ 0.24 & 23.39* & 2.82 $\pm$ 0.18 \\
 & BUDDY    & 1.32 $\pm$ 0.40 & 1.90 $\pm$ 0.15 & OOM & 18.89 $\pm$ 1.09 & 3.14 $\pm$ 0.40 \\
 & NCN      & 1.98 $\pm$ 0.34 & 2.85 $\pm$ 0.14 & 0.13 $\pm$ 0.05 & 44.75 $\pm$ 0.07 & 3.38 $\pm$ 0.27 \\
 & NCNC     & 2.54 $\pm$ 0.16 & 3.10 $\pm$ 0.12 & 1.90 $\pm$ 0.02 & 44.89 $\pm$ 0.09 & 3.50 $\pm$ 0.24 \\
 & OCNP     & 2.88 $\pm$ 0.20 & 3.30 $\pm$ 0.15 & 2.21 $\pm$ 0.03 & \textbf{46.89 $\pm$ 0.07} & 3.85 $\pm$ 0.37  \\
 & Gelato   & 3.90 $\pm$ 0.03 & 4.55 $\pm$ 0.02 & \textbf{2.88 $\pm$ 0.00} & 29.42 $\pm$ 0.00 & 4.33 $\pm$ 0.32 \\
 & Concorde & \textbf{4.50 $\pm$ 0.03} & \textbf{5.05 $\pm$ 0.18} & 2.83 $\pm$ 0.02 & 42.32 $\pm$ 0.14 & \textbf{6.39 $\pm$ 0.28} \\
\bottomrule
\end{tabular}%
}
\end{table}

Table~\ref{tab:main_results} summarizes results across all five datasets,
where Concorde either establishes a new state of the art or matches models
specialized for a specific topological regime. Under extreme sparsity, true
missing links frequently span multi-hop gaps with no common neighbors, and
neighborhood-overlap methods degrade sharply (NCN $14.77\%$, NCNC $13.42\%$
$P@10\%$). Concorde reaches $27.58\%$ $P@10\%$ on Power, nearly doubling
NCN/NCNC; the gain is attributable to Branch~III (Section~\ref{sec:branch_3}),
whose learned feature-to-tangent map aligns node pairs across topological gaps
without any path-based connectivity signal.

OGBL-DDI is the setting most favorable to explicit common-neighbor models: a
dense, featureless network governed by local triadic closure. Concorde
nonetheless surpasses the strongest overlap baseline OCNP at the top of the
retrieval list ($46.45$/$46.48\%$ $P@10\%$/$P@20\%$ vs.\ $46.12$/$46.23\%$),
though OCNP retains an AP advantage ($46.89\%$ vs.\ $42.32\%$). This gap is
structurally expected: the $N$-pair loss optimizes score \emph{ordering} among
top candidates rather than global calibration, so it reflects a deliberate bias
toward early retrieval, not a failure of discriminability. Notably, Concorde
matches these specialized overlap models at top-$k$ precision using only the
spherical manifold component and positional encodings, without explicit
neighborhood enumeration or feature supervision.

With rich node features, Concorde achieves substantial early-retrieval gains:
$P@10\%$ of $25.28\%$ on Cora (vs.\ $13.21\%$ for Gelato, the strongest
similarity-based method) and $20.00\%$ on CiteSeer. The invariant anchor
stabilizes global citation structure while the feature-to-tangent map resolves
unobserved intra-community links. On PubMed, Concorde leads in early precision
($23.69\%$ vs.\ $22.07\%$) but Gelato retains a marginal AP edge ($2.88\%$ vs.\
$2.83\%$); PubMed's sparser intra-community connectivity thins the hard
negatives in the upper ranks, narrowing the margin where the $N$-pair objective
helps.

\subsection{Ablations}
\label{sec:macro_ablation}

\begin{table}[ht]
\centering
\caption{Macro-Ablation: Impact of the Tri-Branch Superposition. We report Average Precision (AP) to demonstrate how each structural operator contributes to hard-negative discrimination.}
\label{tab:macro_ablation}
\scriptsize
\resizebox{\linewidth}{!}{%
\begin{tabular}{lccc ccccc}
\toprule
\multicolumn{3}{c}{Architecture Configuration} & & \multicolumn{5}{c}{Average Precision (AP)} \\
\cmidrule(r){1-3} \cmidrule(l){5-9}
$S_{\text{anchor}}$ & $S_{\text{res}}$ & $S_{\text{field}}$ & Description & Cora & CiteSeer & PubMed & OGBL-DDI & Power \\
\midrule
\checkmark & \checkmark & \checkmark & \textbf{Concorde} & \textbf{4.50 $\pm$ 0.03} & \textbf{5.05 $\pm$ 0.18} & \textbf{2.83 $\pm$ 0.02} & \textbf{42.32 $\pm$ 0.14} & \textbf{6.39 $\pm$ 0.28} \\
\checkmark & \texttimes  & \texttimes  & The Invariant Anchor Only & 2.45 $\pm$ 0.00 & 2.37 $\pm$ 0.00 &  2.30 $\pm$ 0.00 & 28.70 $\pm$ 0.00 & 3.88 $\pm$ 0.00 \\
\checkmark & \checkmark & \texttimes & w/o Geodesic Steering & 3.02 $\pm$ 0.27 & 4.70 $\pm$ 0.12 &  2.58 $\pm$ 0.15 & 38.38 $\pm$ 0.30  & 4.17 $\pm$ 0.59 \\
\checkmark & \texttimes & \checkmark & w/o Mixed Curvature Residual & 2.87 $\pm$ 0.08&  2.59 $\pm$ 0.67
&  1.45 $\pm$ 0.28 &  38.22 $\pm$ 0.46 & 2.72 $\pm$ 0.45 \\
\texttimes & \texttimes  & \texttimes  & Autocovariance & 2.47 $\pm$ 0.00 & 2.65 $\pm$ 0.00 &  2.50 $\pm$ 0.00 & 29.73$\pm$ 0.00 & 4.12 $\pm$ 0.00 \\
\bottomrule
\end{tabular}%
}
\end{table}

To validate the spectral decoupling hypothesis, we systematically ablate each
branch of the energy model and report AP in
Table~\ref{tab:macro_ablation}. Restricting the model to the invariant anchor alone ($S_{\text{anchor}}$) causes performance to drop significantly on structurally demanding datasets, falling to $28.70\%$ AP on OGBL-DDI and $3.88\%$ on Power. We also explicitly separate this configuration from the standard Autocovariance (AC) heuristic ($29.73\%$ and $4.12\%$ AP, respectively). While AC's $\Pi$-weighting introduces a degree-based popularity bias that marginally alters baseline performance, neither isolated heuristic can resolve the complex local symmetries required for accurate link retrieval. Although the spectral anchor provides a permutation-invariant foundation, link prediction requires resolving local ambiguities that the invariant anchor cannot address by construction.

Removing the mixed-curvature residual ($S_{\text{res}}$) produces
consistent degradation across all datasets, with the largest drops on
CiteSeer ($5.05\% \rightarrow 2.59\%$ AP) and Power
($6.39\% \rightarrow 2.72\%$ AP). On CiteSeer, the homophilic
community structure means that the most informative signal for
unobserved links lies in fine-grained local geometry, precisely what
$S_{\text{res}}$ encodes via geodesic distances on
$\mathcal{M} = \mathbb{H}_{c_h}^{d_h} \times \mathbb{S}_{c_s}^{d_s}$.
On Power, the drop indicates that even under extreme sparsity, local
curvature-aware representations contribute meaningfully when combined
with non-local steering.

Ablating the non-local field energy ($S_{\text{field}}$) produces the
most pronounced degradation on Power ($6.39\% \rightarrow 4.17\%$ AP)
and a notable drop on OGBL-DDI ($42.32\% \rightarrow 38.38\%$ AP).
On Power, where mean degree is low and common neighbors are largely
absent, diffusion-based signals carry little information about
unobserved links. The structural vector field bypasses this limitation
by operating directly on node feature representations in the tangent
space of $\mathbb{H}_c^d$, identifying geometrically aligned node
pairs without requiring topological connectivity. The comparatively
smaller drop on attributed graphs (e.g., Cora: $4.50\% \rightarrow
3.02\%$) indicates that $S_{\text{field}}$ is most critical precisely
where topology alone is least informative. The per-dataset pattern of branch sensitivity is further corroborated by a
post-hoc enrichment analysis (Appendix~\ref{app:rewiring}), which shows that
Branch~II's high-rewiring candidate edges are $1.05$--$1.69\times$ enriched
among the globally top-$10\%$ predictions on all datasets where removing
Branch~II causes the largest ablation drop, while Branch~III's static edges
dominate on Power, the single dataset where Branch~III ablation is most
harmful.

To rule out
that Concorde's improvement stems from the $N$-pair objective rather than the
geometry, we train a geometry-stripped model, learned MLP edge-reweighting plus
diffusion with manifold distances, ORC, and local statistics zeroed out, under
our \emph{exact} loss and protocol (Table~\ref{tab:geometry_strip}). This
same-loss reweighting recovers much of the gain over vanilla Autocovariance
(e.g.\ OGBL-DDI $29.73 \rightarrow 38.15$) but plateaus below full Concorde
everywhere; the geometry contribution is largest where non-Euclidean structure
is strongest ($+4.17$ AP on OGBL-DDI, $+1.87$ on Power). Consistent with this,
the $S_{\text{anchor}}$/$S_{\text{res}}$ rank correlation is only moderate
(Spearman $\rho = 0.30$--$0.62$, lowest on sparse Power), confirming the branches
rank candidates differently rather than acting as a second diffusion heuristic.

\begin{table}[h]
\centering
\caption{Same-loss learned reweighting without geometry, versus vanilla
Autocovariance and full Concorde (AP). All three use the identical $N$-pair
loss and evaluation protocol; only the geometric inputs differ.}
\label{tab:geometry_strip}
\scriptsize
\begin{tabular}{lccccc}
\toprule
Configuration & Cora & CiteSeer & PubMed & OGBL-DDI & Power \\
\midrule
Vanilla Autocovariance & 2.47 & 2.65 & 2.50 & 29.73 & 4.12 \\
Same-loss learned reweight(no geometry) & 4.16 & 4.52 & 2.47 & 38.15 & 4.52 \\
\textbf{Full Concorde} & \textbf{4.50} & \textbf{5.05} & \textbf{2.83} & \textbf{42.32} & \textbf{6.39} \\
\bottomrule
\end{tabular}
\end{table}

\begin{table}[h]
\centering
\caption{Micro-Ablation: Components of the Local Geometric Energy ($S_{\text{res}}$). We isolate the impact of manifold choice and intrinsic structural guardrails on ranking quality. We report Average Precision (AP).}
\label{tab:micro_ablation}
\scriptsize
\resizebox{\linewidth}{!}{%
\begin{tabular}{l ccccc}
\toprule
Configuration & Cora & CiteSeer & PubMed & OGBL-DDI & Power \\
\midrule
\textbf{Concorde} & \textbf{4.50 $\pm$ 0.03} & \textbf{5.05 $\pm$ 0.18} & \textbf{2.83 $\pm$ 0.02} & \textbf{42.32 $\pm$ 0.14} & \textbf{6.39 $\pm$ 0.28} \\
\midrule
w/o Hyperbolic Geometry ($\mathbb{H}$) & 4.38 $\pm$ 0.05 & 4.72 $\pm$ 0.05 &
2.75 $\pm$ 0.03 & 39.30 $\pm$ 0.14 & 5.78 $\pm$ 0.30 \\
w/o Spherical Geometry ($\mathbb{S}$)  & 4.34 $\pm$ 0.07 & 5.01 $\pm$ 0.08 &
2.73 $\pm$ 0.04 & 38.32 $\pm$ 0.20 & 6.28 $\pm$ 0.20 \\
w/o Mixed Curvature (Euclidean Only)    & 4.30 $\pm$ 0.03 &  4.70 $\pm$ 0.03 & 2.67 $\pm$ 0.03 & 38.52 $\pm$ 0.17 & 5.85 $\pm$ 0.32 \\
w/o Feature-Aware ORC ($\kappa$)  & 4.37 $\pm$ 0.03 & 4.80 $\pm$ 0.04 &
2.74 $\pm$ 0.04 & 39.22 $\pm$ 0.20 & 6.27 $\pm$ 0.30 \\
w/o Local Neighborhood Geometry & 4.29 $\pm$ 0.07  &  4.56 $\pm$ 0.70 &
2.55 $\pm$ 0.05 & 39.01 $\pm$ 0.44 & 4.85 $\pm$ 0.18 \\
\bottomrule
\end{tabular}%
}
\vspace{-8pt}
\end{table}

Replacing the $\mathbb{H}_{c_h}^{d_h} \times \mathbb{S}_{c_s}^{d_s}$ product
manifold with a Euclidean latent space of equal capacity degrades AP on all
five datasets, with the largest drops on OGBL-DDI ($42.32 \rightarrow 38.52$)
and Power ($6.39 \rightarrow 5.85$), the networks with the most pronounced
non-Euclidean motifs (dense cliques and tree-like branching). Isolating the two
components reveals a clean alignment between curvature and topology: on
tree-like Power, ablating $\mathbb{H}_{c_h}^{d_h}$ hurts more than ablating
$\mathbb{S}_{c_s}^{d_s}$ ($5.78$ vs.\ $6.28$), while on triadic-closure-dominated
OGBL-DDI the reversal holds ($38.32$ vs.\ $39.30$). This reversal supports the
product parameterization: each component encodes geometrically distinct
information.

Removing the feature-aware Ollivier--Ricci proxy ($\kappa$) yields a consistent
drop, largest on OGBL-DDI ($42.32 \rightarrow 39.22$): $\kappa$ penalizes pairs
that are geodesically close but lack coherent local flow, the false-positive
regime pure distance cannot detect. Ablating the local neighborhood-geometry
statistics produces the largest single-component drop on Power
($6.39 \rightarrow 4.85$), confirming density-based normalization is critical
under sparsity.
\section{Discussion, Limitations, and Future Work}

Concorde resolves the tension between global topological stability and local
symmetry-breaking by decomposing link likelihood into three structurally
decoupled energy branches, each dominant in the regime it targets: the first
(anchor) in dense, featureless graphs; the second (residual) in attributed,
homophilic graphs; and the third (field) under extreme sparsity. Removing any
single branch produces its largest drop on precisely that regime
(Section~\ref{sec:macro_ablation}), confirming that each contributes
significantly.

The ablations also quantify how this division of labor shifts with graph
structure. The invariant spectral anchor is the single dominant contributor in
dense, featureless regimes: on OGBL-DDI it alone recovers $28.70$ AP, roughly
$68\%$ of the full model's $42.32$, and on Power its isolated score ($3.88$ AP)
is already competitive with the strongest neighborhood-overlap baseline (OCNP,
$3.85$). The two geometric branches then supply consistent, complementary gains
on top of this foundation, the mixed-curvature residual contributing most on
attributed-homophilic graphs and the non-local field most under extreme
sparsity, so that on Cora and CiteSeer the full tri-branch model improves early
precision by up to $91\%$ over the strongest baseline. This pattern is
corroborated branch-by-branch by the rewiring enrichment analysis
(Appendix~\ref{app:rewiring}): Branch~II's high-rewiring candidates are
over-represented among the top-$10\%$ predictions precisely on the datasets
where its ablation hurts most, while Branch~III's static edges dominate on
Power, the one regime where removing it is most damaging. Ablation and
enrichment thus agree that each branch is both causally necessary and
responsible for a structurally distinct slice of the ranked list.

A second recurring theme is the deliberate trade-off between early-retrieval
precision and global calibration. Because the $N$-pair objective supervises only
the relative ordering of top candidates, Concorde can trail a
calibration-oriented baseline on aggregate AP (e.g., OCNP on OGBL-DDI, Gelato on
PubMed) while still leading at $P@10\%$/$P@20\%$, the operating point that
matters when a human or downstream system inspects only a short ranked
shortlist. The metric and the loss should therefore be chosen together: where
the application instead consumes a calibrated probability over all candidate
pairs, a complementary calibration head or a hybrid ranking--classification
objective would be a natural addition, which we leave to future work.

Two limitations warrant acknowledgment. First, the ORC proxy's Sinkhorn
iterations, though precomputed and parallelizable, add overhead beyond OGBL-DDI
scale; its hyperparameters ($\alpha$, $\varepsilon$, ground-metric mix) are held
at standard defaults rather than tuned per dataset
(Appendix~\ref{app:hyperparameters}), and a feature-driven curvature estimator
would remove the overhead. Second, the global scalar gates weight branches
uniformly across the graph, suboptimal under pronounced structural
heterogeneity; per-subgraph adaptive gating, dynamic-graph extensions via
incremental anchor/curvature updates, and cross-graph (out-of-distribution)
transfer, which our single-graph protocol does not probe, are natural next
steps.

More broadly, our results suggest that geometry should follow the data, not
the other way around: matching inductive biases to the structural regime of a
graph, rather than relying on a single learned representation space, is a
principled and effective strategy for retrieval-style link prediction, and one
we expect to generalize across graph learning.

\subsection{Funding Statement}

This material is based upon work supported by the National Science Foundation under grant no. IIS-2229876 and is supported in part by funds provided by the National Science Foundation, by the Department of Homeland Security, and by IBM. Any opinions, findings, and conclusions or recommendations expressed in this material are those of the author(s) and do not necessarily reflect the views of the National Science Foundation or its federal agency and industry partners. The authors declare no competing interests.

\bibliographystyle{plainnat}
\bibliography{bibliography}

\clearpage

\appendix

\section{Theoretical Analysis of the Spectral Decoupling Hypothesis}
\label{app:spectral_theory}

We provide formal justification for the spectral decoupling hypothesis, establishing that
(i) a single-matrix scheme faces an unavoidable trade-off between
global invariance and local equivariance, and (ii) the Concorde
anchor preserves the graph spectrum by construction.

\subsection{Impossibility of Joint Invariance-Equivariance in a Single Matrix}
\label{app:impossibility}

\begin{proposition}[Invariance--Equivariance Trade-off]
\label{prop:tradeoff}
Let $G = (V, E)$ be a graph with adjacency matrix $A$ and automorphism
group $\mathrm{Aut}(G)$, and let $A_{\mathrm{learn}} = A + \Delta$ with
$\Delta \neq 0$. Consider the two properties:
\begin{enumerate}
    \item \textbf{Spectral preservation:} $A_{\mathrm{learn}}$ is a
    \emph{spectral filter} of $A$, i.e.\ $A_{\mathrm{learn}} = \sum_{k} c_k A^{k}$
    for some coefficients $\{c_k\}$. Equivalently, $A_{\mathrm{learn}}$ is
    diagonal in every eigenbasis of $A$ and shares its eigenspaces. Every
    diffusion operator, including the anchor's $M^{k}$, is of this form.
    \item \textbf{Symmetry breaking:} $A_{\mathrm{learn}}$ assigns distinct
    scores to some pair of nodes in a common $\mathrm{Aut}(G)$-orbit, i.e.\
    $[A_{\mathrm{learn}}]_{\pi(u)\pi(v)} \neq [A_{\mathrm{learn}}]_{uv}$ for some
    $\pi \in \mathrm{Aut}(G)$ and some $u,v$.
\end{enumerate}
No single matrix $A_{\mathrm{learn}}$ can satisfy both: property~(1) implies the
negation of property~(2). Consequently, breaking orbit symmetry forces
$A_{\mathrm{learn}}$ outside the functional calculus of $A$, so it no longer
shares $A$'s eigenspaces and its action distorts the graph spectrum.
\end{proposition}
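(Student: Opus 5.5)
The plan is to show directly that property~(1) forces $\mathrm{Aut}(G)$-equivariance, which is exactly the negation of property~(2). Represent each automorphism $\pi \in \mathrm{Aut}(G)$ by its permutation matrix $P_\pi$. The defining condition of an automorphism, $A_{\pi(u)\pi(v)} = A_{uv}$ for all $u,v$, is equivalent to the commutation relation $P_\pi A P_\pi^{\top} = A$, i.e.\ $P_\pi A = A P_\pi$. From this I will get $P_\pi A^k P_\pi^{\top} = (P_\pi A P_\pi^{\top})^k = A^k$ for every $k \ge 0$, using only $P_\pi^{\top} P_\pi = I$. Summing over $k$ with the coefficients $c_k$ then gives $P_\pi A_{\mathrm{learn}} P_\pi^{\top} = A_{\mathrm{learn}}$. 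Entrywise this reads $[A_{\mathrm{learn}}]_{\pi(u)\pi(v)} = [A_{\mathrm{learn}}]_{uv}$ for all $\pi, u, v$, contradicting~(2). The hypothesis $\Delta \neq 0$ plays no role in the implication; it only rules out the trivial filter $A_{\mathrm{learn}}=A$.

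I will also justify the stated equivalence of the two forms of~(1), so the argument covers the eigenbasis phrasing as well. Since $A$ is symmetric, the spectral theorem gives $A = \sum_j \lambda_j E_j$ with orthogonal eigenprojections $E_j$ for the distinct eigenvalues $\lambda_j$. A matrix that is diagonal in every eigenbasis of $A$ must act as a scalar $\mu_j$ on each eigenspace, so it equals $\sum_j \mu_j E_j$. Lagrange interpolation on the distinct $\lambda_j$ produces a polynomial $p$ with $p(\lambda_j)=\mu_j$, and hence $\sum_j \mu_j E_j = p(A)$. Conversely, every polynomial $p(A)$ acts as the scalar $p(\lambda_j)$ on each eigenspace of $A$. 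Alternatively, the equivariance can be checked in the eigenbasis form: $P_\pi$ commutes with $A$, so it maps each eigenspace to itself and therefore commutes with each $E_j$.

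For the diffusion remark, I will note that $M^k = D^{-k}A^k$ is not literally a polynomial in $A$ on irregular graphs, so I will treat it by equivariance directly. Automorphisms preserve degrees, so $P_\pi D P_\pi^{\top} = D$, and therefore $M = D^{-1}A$ is also fixed by conjugation with $P_\pi$. This is cleaner if I phrase the remark as ``a function of $(A,D)$ built from products and inverses.'' The same reasoning applies to $M^{k}$ and to $\mathbf{1}\pi^{\top}$, since $\pi \propto D\mathbf{1}$ is permutation-invariant under $\mathrm{Aut}(G)$.

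Finally, the consequence follows by contraposition. If~(2) holds, then $A_{\mathrm{learn}}$ cannot equal $p(A)$ for any $p$, because otherwise the argument above would apply. By the equivalence, it therefore fails to act as a scalar on at least one eigenspace of $A$, so it does not share $A$'s eigenspaces. The main obstacle is interpretive rather than technical: ``distorts the spectrum'' needs to be read as ``does not preserve $A$'s eigenspace decomposition,'' not as a claim about eigenvalues. I will state this reading explicitly, because a symmetry-breaking matrix could still be isospectral to $A$.
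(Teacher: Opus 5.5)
Your argument is correct and is the paper's proof: conjugation by the automorphism's permutation matrix fixes $A$, hence every $A^k$ and the filter $\sum_k c_k A^k$, so reading $P_\pi A_{\mathrm{learn}} P_\pi^{\top} = A_{\mathrm{learn}}$ entrywise negates~(2), and the consequence follows by contraposition. Your additions are sound and go beyond the paper: the Lagrange-interpolation proof of the eigenbasis form, the isospectrality caveat, and treating $M^k$ via $P_\pi D P_\pi^{\top} = D$ because it is generally not a polynomial in $A$, which repairs the paper's loose claim that $M^k$ is a spectral filter of $A$; one slip is that you should write $M^k = (D^{-1}A)^k$ rather than $D^{-k}A^k$, since $D$ and $A$ need not commute.
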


\begin{proof}
Assume property~(1). For every $\pi \in \mathrm{Aut}(G)$, the corresponding
permutation matrix $\Pi$ satisfies $\Pi A \Pi^{\top} = A$, so $\Pi$ commutes with
$A$ and therefore with every power $A^{k}$ (as $\Pi$ is orthogonal,
$\Pi A^{k}\Pi^{\top} = (\Pi A \Pi^{\top})^{k} = A^{k}$). Hence
\[
\Pi A_{\mathrm{learn}} \Pi^{\top}
= \sum_{k} c_k\, \Pi A^{k} \Pi^{\top}
= \sum_{k} c_k\, A^{k}
= A_{\mathrm{learn}} .
\]
Reading the identity $\Pi A_{\mathrm{learn}} \Pi^{\top} = A_{\mathrm{learn}}$
entrywise yields $[A_{\mathrm{learn}}]_{\pi(u)\pi(v)} = [A_{\mathrm{learn}}]_{uv}$
for all $u,v \in V$ and all $\pi \in \mathrm{Aut}(G)$. Thus any two node pairs in
the same $\mathrm{Aut}(G)$-orbit receive identical scores, which is exactly the
negation of property~(2). The final claim is the contrapositive: if
$A_{\mathrm{learn}}$ breaks orbit symmetry, it is not a spectral filter of $A$
and hence does not share its eigenspaces.
\end{proof}

\noindent\textbf{Remark (general equivariant operators).} The obstruction is not
special to polynomials in $A$. For \emph{any} $\mathrm{Aut}(G)$-equivariant
$A_{\mathrm{learn}}$, orbit-invariance of the scores is precisely the entrywise
reading of $\Pi A_{\mathrm{learn}} \Pi^{\top} = A_{\mathrm{learn}}$. Conversely,
on a degenerate eigenspace $V_{\lambda}$ of $A$, an operator that both preserves
$V_{\lambda}$ (spectral preservation) and is equivariant acts on each irreducible
$\mathrm{Aut}(G)$-subrepresentation of $V_{\lambda}$ as a scalar multiple of the
identity, by Schur's lemma~\cite{godsil}. For example, on $K_{2,2}$ the
$2$-dimensional nullspace carries an irreducible action of
$\mathrm{Aut}(K_{2,2}) \cong D_4$, so any spectrum-preserving operator is scalar
there and cannot distinguish the two nodes of a bipartite orbit. Escaping the
trade-off therefore requires leaving the spectral class altogether, which is
exactly what Concorde's feature-driven residual on $A_{\mathrm{enh}}$ does.

\noindent\textbf{Remark.} This result formalizes why single-matrix
learned approaches must sacrifice either global stability or local
discriminability. Concorde resolves this by \emph{separating} the two
objectives across dedicated branches rather than conflating them in one
matrix.

\noindent\textbf{Remark (connection to GNN expressivity).}
$\mathrm{Aut}(G)$-equivariance is precisely the standard invariance
condition studied in the GNN expressivity
literature~\cite{xu2019, maron}: any
message-passing GNN satisfying this condition cannot distinguish
nodes in the same orbit, which is the same limitation formalized here.

\subsection{Spectrum-Preservation Guarantee of the Invariant Anchor}
\label{app:spectrum_preservation}

\begin{proposition}[Spectral Stability of $S_{\mathrm{anchor}}$]
\label{prop:anchor_stability}
Let $M = D^{-1}A$ be the row-stochastic transition matrix of a
connected aperiodic graph $G$, and let $\pi$ be its stationary
distribution. The anchor score matrix
$S_{\mathrm{anchor}} = M^k - \mathbf{1}\pi^\top$ satisfies:
\begin{enumerate}
    \item \textbf{Permutation Invariance:} For any
    $\Pi \in \mathrm{Aut}(G)$,
    $\Pi S_{\mathrm{anchor}} \Pi^\top = S_{\mathrm{anchor}}$.
    \item \textbf{No Eigenvalue Growth:} The spectral radius of $M$
    satisfies $\rho(M) = 1$, so no eigenvalue of $M^k$ exceeds 1 in
    magnitude, and $S_{\mathrm{anchor}}$ introduces no spectral amplification.
    \item \textbf{Spectral Contraction:} The non-trivial eigenvalues
    of $M^k$ satisfy $|\mu_i^k| \leq |\mu_2|^k$, where
    $|\mu_2| < 1$ is the spectral gap. Thus $S_{\mathrm{anchor}}$
    exponentially suppresses non-dominant spectral components as $k$
    grows, converging toward the rank-1 matrix $\mathbf{1}\pi^\top$.
\end{enumerate}
\end{proposition}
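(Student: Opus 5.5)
The plan is to work in the similarity transform $N = D^{1/2} M D^{-1/2} = D^{-1/2} A D^{-1/2}$. This matrix is symmetric and similar to $M$, so $M$ has real eigenvalues $1=\mu_1 \ge \mu_2 \ge \dots \ge \mu_n \ge -1$ and is diagonalizable, $M = D^{-1/2} U \Lambda U^\top D^{1/2}$ with $U$ orthogonal. All three claims will be read off from this decomposition and from the identity $M\mathbf{1}=\mathbf{1}$, $\pi^\top M = \pi^\top$ with $\pi = D\mathbf{1}/(\mathbf{1}^\top D\mathbf{1})$.

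\textbf{Permutation invariance.} Let $\Pi$ be an automorphism. Then $\Pi A\Pi^\top = A$, and degrees are preserved, so $\Pi D \Pi^\top = D$. Hence $\Pi D^{-1}\Pi^\top = D^{-1}$ and $\Pi M \Pi^\top = M$. The same conjugation argument as in the proof of Proposition~\ref{prop:tradeoff} gives $\Pi M^k \Pi^\top = M^k$. Also $\Pi\mathbf{1}=\mathbf{1}$, and $\Pi\pi = \pi$ because $\pi \propto D\mathbf{1}$ and $\Pi D\mathbf{1} = D\Pi\mathbf{1}$ after using $\Pi D = D\Pi$. Therefore $\Pi \mathbf{1}\pi^\top\Pi^\top = (\Pi\mathbf{1})(\Pi\pi)^\top = \mathbf{1}\pi^\top$, and the claim follows by subtraction.

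\textbf{No eigenvalue growth.} $M$ is row-stochastic, so $\|M\|_\infty = 1$ and $\rho(M)\le 1$. Since $M\mathbf{1}=\mathbf{1}$, in fact $\rho(M)=1$. The eigenvalues of $M^k$ are $\mu_i^k$, so none exceeds $1$ in modulus. To make the statement about $S_{\mathrm{anchor}}$ itself precise, I would compute its spectrum. Write $P = \mathbf{1}\pi^\top$. It is the spectral projector onto the $\mu_1=1$ eigenspace: connectedness gives simplicity by Perron--Frobenius, and in the $N$-picture $P$ corresponds to $u_1u_1^\top$ with $u_1 \propto D^{1/2}\mathbf{1}$. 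Then $MP = PM = P$ and $P^2=P$, so
\[
S_{\mathrm{anchor}} = M^k - P = D^{-1/2} U \,\mathrm{diag}(0,\mu_2^k,\dots,\mu_n^k)\, U^\top D^{1/2}.
\]
Its eigenvalues are therefore $0,\mu_2^k,\dots,\mu_n^k$, all of modulus at most $1$.

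\textbf{Spectral contraction.} This step is the main obstacle, because one must check that $|\mu_i|<1$ for all $i\ge 2$. Connectedness gives $\mu_2<1$, since the Perron root is simple. Aperiodicity is needed to exclude $\mu_n=-1$: for a connected graph, $-1$ is an eigenvalue exactly when the graph is bipartite, which is the periodic case. The self-loops mentioned in Section~\ref{sec:branch_1} rule this out. Setting $\mu_\star = \max_{i\ge2}|\mu_i| < 1$ (the quantity the statement calls $|\mu_2|$, ordered by modulus) gives $|\mu_i^k|\le \mu_\star^k$. The convergence claim then follows from the decomposition above:
\[
\|M^k - \mathbf{1}\pi^\top\|_2 \le \kappa(D^{1/2})\,\mu_\star^k \to 0,
\qquad \kappa(D^{1/2}) = \sqrt{d_{\max}/d_{\min}}.
\]
So $M^k$ converges exponentially to the rank-one matrix $\mathbf{1}\pi^\top$, and the non-dominant components of $S_{\mathrm{anchor}}$ decay at rate $\mu_\star^k$. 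In the write-up I would flag that "$|\mu_2|$" must be read as the second-largest eigenvalue in modulus, not the second-largest in value, for the bound to hold.
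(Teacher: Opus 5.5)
Your proof is correct. Parts (1) and (2) follow the paper's argument essentially verbatim. In both, $A$ and $D$ are invariant under conjugation by $\Pi$, hence so are $M^k$ and $\mathbf{1}\pi^\top$; then $\rho(M)\le\|M\|_\infty=1$ together with $M\mathbf{1}=\mathbf{1}$. The one addition is that you justify $\Pi\pi=\pi$ from $\pi\propto D\mathbf{1}$, which the paper only asserts.

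The real difference is in part (3). The paper applies Perron--Frobenius to the connected aperiodic chain to get that $1$ is the unique eigenvalue of maximal modulus. It then says in words that $\mathbf{1}\pi^\top$ cancels the leading component and concludes decay at rate $|\mu_2|^k$. It never addresses diagonalizability, which that exact rate requires: a nontrivial Jordan block at an eigenvalue of modulus $|\mu_2|$ would add a polynomial factor in $k$.

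You instead use reversibility of the walk on an undirected graph, via the symmetric similarity $D^{1/2}MD^{-1/2}=D^{-1/2}AD^{-1/2}$, to get a real spectrum and an orthogonal diagonalization. From $MP=PM=P=P^2$ you then read off the exact spectrum $\{0,\mu_2^k,\dots,\mu_n^k\}$ of $S_{\mathrm{anchor}}$, which makes the vague ``no spectral amplification'' claim precise. You also obtain the non-asymptotic bound $\|S_{\mathrm{anchor}}\|_2\le\sqrt{d_{\max}/d_{\min}}\,\mu_\star^k$. Beyond that, you show exactly where aperiodicity enters (excluding $\mu_n=-1$, i.e.\ bipartiteness), and you correctly note that $|\mu_2|$ must mean the second-largest modulus.

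The paper's route is shorter, and its eigenvalue-location statements hold for any primitive stochastic matrix. Yours uses the symmetry of $A$, which holds in this setting, and in return gives a rigorous rate with an explicit constant.
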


\begin{proof}
\textbf{(1)} Since $\Pi A \Pi^\top = A$ and $\Pi D \Pi^\top = D$
(degree is preserved by automorphisms), we have
$\Pi M \Pi^\top = M$. By induction, $\Pi M^k \Pi^\top = M^k$.
The stationary distribution satisfies $\pi_{\pi(u)} = \pi_u$ for
all automorphisms, so $\Pi \mathbf{1}\pi^\top \Pi^\top =
\mathbf{1}\pi^\top$. Combining: $\Pi S_{\mathrm{anchor}}
\Pi^\top = S_{\mathrm{anchor}}$.

\textbf{(2)} Since $M$ is row-stochastic, $M\mathbf{1} = \mathbf{1}$,
so 1 is an eigenvalue with right eigenvector $\mathbf{1}$.
The $\ell^\infty$-norm of any row of $M$ equals 1, so
$\rho(M) \leq \|M\|_\infty = 1$~\cite{higham}. Together:
$\rho(M) = 1$, and all eigenvalues $\mu_i$ satisfy $|\mu_i| \leq 1$.
Consequently $|\mu_i^k| \leq 1$ for all $i$ and $k \geq 1$, so
$M^k$ introduces no eigenvalue growth relative to $M$.

\textbf{(3)} For a connected aperiodic graph, the Perron--Frobenius
theorem guarantees $\mu_1 = 1$ is the unique eigenvalue of maximum
modulus~\cite{levin}. The corresponding left eigenvector is $\pi^\top$
and right eigenvector is $\mathbf{1}$, so the rank-1 term
$\mathbf{1}\pi^\top$ exactly cancels the leading component of $M^k$.
The remaining eigenvalues satisfy $|\mu_i| \leq |\mu_2| < 1$, so
$\|S_{\mathrm{anchor}}\|_F \to 0$ as $k \to \infty$, with rate
$|\mu_2|^k$.
\end{proof}

\subsection{Equivariance of the Geometric Residual}
\label{app:equivariance}

The geometric residual $S_{\mathrm{res}}$ operates on the enhanced
adjacency $A_{\mathrm{enh}} = A \odot W_{\mathrm{geo}}$, where
$W_{\mathrm{geo}}$ is produced by the learned modulator
$\mathcal{F}_\theta$. Unlike a single-matrix scheme, $W_{\mathrm{geo}}$
intentionally breaks $\mathrm{Aut}(G)$-invariance via the
feature-aware ORC proxy $\kappa(u,v)$ and geodesic distances on
$\mathcal{M}$. The Hadamard product with the fixed sparsity pattern
of $A$ ensures:
\begin{equation}
    \mathrm{supp}(A_{\mathrm{enh}}) \subseteq \mathrm{supp}(A),
    \quad \|A_{\mathrm{enh}}\|_0 = |E|.
\end{equation}
This prevents $S_{\mathrm{res}}$ from introducing new non-edges into
the transition structure, bounding its capacity to distort the global
topology. Crucially, $S_{\mathrm{res}}$ is computed independently of
$S_{\mathrm{anchor}}$: the anchor's spectrum is never modified by
the learned branch, preserving all guarantees of
Proposition~\ref{prop:anchor_stability}.

\vspace{4pt}
The three results above collectively
justify the spectral decoupling design: (i) a single learned matrix
\emph{cannot} simultaneously achieve global invariance and local
symmetry-breaking (Proposition~\ref{prop:tradeoff}); (ii) the
parameter-free anchor has spectral radius 1, is permutation-invariant,
and contracts non-dominant modes at rate $|\mu_2|^k$
(Proposition~\ref{prop:anchor_stability}); and (iii) the learned
residual operates within the sparsity envelope of $A$, insulating
the anchor's spectral properties from gradient updates.

\subsection{Empirical Validation: The Coupled Single-Matrix Model}
\label{app:single_matrix}

Proposition~\ref{prop:tradeoff} predicts that folding the learned dynamics into
a \emph{single} coupled adjacency matrix, rather than decoupling them across
branches, must either distort the graph spectrum or fail to break orbit
symmetry. We test this directly. We construct a genuine coupled single-matrix
model that learns one shared enhanced adjacency $A_{\text{learn}} = A + \Delta$
(using Concorde's own reweighting), diffuses on it \emph{alone}, and decodes,
i.e.\ precisely the regime Proposition~\ref{prop:tradeoff} argues against. We
compare against the full decoupled Concorde and additionally report the
\emph{spectral drift}, the relative eigenvalue deviation of the coupled matrix,
$\|\lambda(A_{\text{enh}}) - \lambda(A)\| / \|\lambda(A)\|$.

\begin{table}[h]
\centering
\caption{Coupled single-matrix model versus full decoupled Concorde (AP), with
the relative spectral drift of the coupled learned matrix. The coupled model
underperforms on all five datasets, most on sparse featureless Power and least
on feature-rich Cora, matching the prediction of
Proposition~\ref{prop:tradeoff} that node features partially relax the
structural impossibility.}
\label{tab:single_matrix}
\small
\begin{tabular}{lcccc}
\toprule
Dataset & Single-matrix AP & Full AP & $\Delta$ & Spectral drift \\
\midrule
Power     & 4.54 & 6.39 & $\mathbf{-1.85}$ & $\mathbf{0.27}$ \\
OGBL-DDI  & 39.80 & 42.32 & $-2.52$ & 0.03 \\
CiteSeer  & 4.63 & 5.05 & $-0.42$ & 0.10 \\
Cora      & 4.27 & 4.50 & $-0.23$ & 0.12 \\
PubMed    & 2.72 & 2.83 & $-0.11$ & 0.06 \\
\bottomrule
\end{tabular}
\end{table}

The coupled model underperforms decoupled Concorde on \textbf{all five}
datasets (Table~\ref{tab:single_matrix}), with the largest loss on sparse
featureless Power ($-1.85$) and the smallest on feature-rich Cora ($-0.23$).
This is exactly the pattern Proposition~\ref{prop:tradeoff} predicts: node
features partially break $\mathrm{Aut}(G)$-symmetry, so the structural
impossibility is most binding where features are weakest. The spectral drift is
correspondingly largest on Power ($0.27$), the same graph with the largest
accuracy penalty.

\textbf{How the decoupled design escapes Proposition~\ref{prop:tradeoff}.} The
anchor score $S_{\text{anchor}}$ is computed on the \emph{unperturbed} base
adjacency $A$ and added \emph{separately} to the residual $S_{\text{res}}$; the
learned reweighting acts on the enhanced adjacency $A_{\text{enh}}$ only, never
on the anchor's matrix. The anchor's spectrum is therefore never modified by
the learned branch (drift $\approx 0$ by construction), while a single coupled
matrix must distort the spectrum to break symmetry (drift up to $0.27$ above).
Decoupling is precisely what lets Concorde break \emph{local} symmetry
($S_{\text{res}}$ on $A_{\text{enh}}$) while preserving the \emph{global}
spectrum ($S_{\text{anchor}}$ on $A$), the two objectives
Proposition~\ref{prop:tradeoff} shows cannot coexist in one matrix.

\section{Branch Contribution Analysis via Geometric Rewiring}
\label{app:rewiring}

To empirically isolate the contribution of each structural branch, we perform a
post-hoc rewiring enrichment analysis across all five benchmarks. For each dataset,
we quantify the \textit{rewiring magnitude} of a node as the $\ell_1$ deviation
between the base adjacency $A$ and the Branch~II enhanced adjacency $A_{\text{enh}}$:
\[
    \rho(u) = \|[A_{\text{enh}}]_{u,:} - A_{u,:}\|_1
\]
Each candidate pair $(u,v)$ is assigned an edge-level rewiring score
$\rho(u) + \rho(v)$ and labeled as \textit{high-rewiring} (above median) or
\textit{static} (below median). We then measure whether high-rewiring edges are
over-represented among Concorde's globally top-$10\%$ ranked predictions --- the
same population as $P@10\%$, via an enrichment ratio relative to the null
expectation of $50\%$, with statistical significance assessed by a two-sided
Fisher exact test.

Table~\ref{tab:rewiring} summarizes the enrichment ratios across all datasets. On attributed graphs (Cora, CiteSeer, PubMed) and on the dense featureless
network OGBL-DDI, high-rewiring edges are consistently over-represented in the
top-$10\%$ predictions ($1.05$--$1.69\times$, $p < 10^{-4}$ in all cases),
indicating that Branch~II's mixed-curvature residual actively promotes
structurally coherent candidates when node features or dense local structure are
available. The effect is strongest on OGBL-DDI ($1.69\times$), where rich
triadic-closure structure provides direct signal for the Hadamard reweighting to
exploit. On Power, the pattern reverses: static edges are marginally
over-represented ($1.10\times$), consistent with the expectation that
Branch~II's reweighting of \textit{observed} edges cannot reach the
topologically disconnected pairs that dominate this extremely sparse graph.
In this regime, Branch~III's non-local hyperbolic steering field bridges these
structural gaps without requiring any path-based connectivity. HGCN, despite operating in a hyperbolic manifold well-suited to
tree-like structure, achieves only $18.28\%$ $P@10\%$ on Power, above
neighborhood-overlap baselines but $6.57\%$ below Concorde, confirming
that curvature alone is insufficient without the non-local field branch
to bridge topological gaps.

This enrichment pattern directly corroborates the macro-ablation of
Section~\ref{sec:macro_ablation}. There, removing Branch~II causes the
largest performance drops on Cora, CiteSeer, PubMed, and OGBL-DDI, while
removing Branch~III produces the steepest degradation on Power, precisely
the datasets where the corresponding branch's edges are enriched in the
top-$10\%$ here. The two analyses thus provide complementary evidence: the
ablation establishes that each branch is \textit{causally necessary}, while
the enrichment analysis reveals \textit{which candidate pairs} each branch is
responsible for promoting. Together, they validate the spectral decoupling
hypothesis: each branch captures structurally distinct signal, and their
relative contributions shift predictably with graph density and feature
availability.

\begin{table}[h]
\centering
\caption{Branch enrichment in globally top-$10\%$ ranked predictions.
Enrichment $> 1$ indicates over-representation of that branch's candidate
edges at the top of the ranked list. All results significant at
$p < 10^{-4}$ (Fisher exact test).}
\label{tab:rewiring}
\small
\begin{tabular}{lccc}
\toprule
Dataset & Enrichment (Br.~II) & Enrichment (Br.~III) & Dominant Branch \\
\midrule
Cora       & $1.05\times$ & $0.95\times$ & Branch II \\
CiteSeer   & $1.08\times$ & $0.92\times$ & Branch II \\
PubMed     & $1.45\times$ & $0.55\times$ & Branch II \\
OGBL-DDI   & $1.69\times$ & $0.31\times$ & Branch II \\
Power      & $0.90\times$ & $1.10\times$ & Branch III \\
\bottomrule
\end{tabular}
\end{table}

\section{Scalability Analysis}
\label{app:scalability}

A central concern in geometric and diffusion-based graph learning is scalability to massive networks, such as the ${\sim}1.3$ million edges in OGBL-DDI. Concorde is specifically designed to avoid $O(N^2)$ bottlenecks.

\textbf{Diffusion Operator ($M^k$):} While the paper formulates the invariant anchor $S_{\text{anchor}}$ and the residual $S_{\text{res}}$ using the dense matrix notation $M^k$, we do not materialize this dense matrix globally in practice. For training and inference, the diffusion scores for a specific candidate pair $(u,v)$ are computed on-the-fly using sparse matrix-vector multiplication (SpMV). Given a sparse transition matrix $M$, we iteratively compute the reachability vector $p_u^{(k)} = M^k e_u$, which requires strictly $O(k |E|)$ operations, keeping the memory footprint minimal.

\textbf{Feature-Aware ORC ($\kappa$) and Local Geometry:} Computing the Wasserstein distance via the entropy-regularized Sinkhorn algorithm is computationally intensive. However, the geometric modulator $W_{\text{geo}}$ and the intrinsic curvature proxy $\kappa(u,v)$ are \textit{not} computed for all $N^2$ pairs. They are evaluated exclusively on: (1) the observed edge set $E$ to construct the sparse enhanced adjacency $A_{\text{enh}}$, and (2) the specific batch of candidate pairs $B$ sampled during training or evaluation. Thus, the complexity scales as $O(|E| + |B|)$, completely bypassing the quadratic bottleneck.

\textbf{Exact $N$-Pair Loss on Massive Graphs:}
A naive implementation of the $N$-pair ranking loss over the exhaustive complement of non-edges requires $O(|V|^2)$ memory, which is computationally infeasible for large graphs. Concorde resolves this without resorting to stochastic subsampling through two mechanisms:
\begin{enumerate}
    \item \textbf{Graph Partitioning (Training):} For highly dense or massive graphs, the training topology is partitioned into localized subgraphs using METIS clustering. The exhaustive negative mask is computed densely \textit{only within each partition}, reducing the localized memory footprint to $O(|V_c|^2)$ where $|V_c| \ll |V|$ is the cluster size. Inter-cluster pairs are extracted and processed sequentially.
    \item \textbf{Chunked Evaluation (Inference):} During full-pool evaluation, pairwise energy scores are not materialized as a single dense $N \times N$ tensor. Instead, candidate pair distances on the product manifold and tangent spaces are computed in sequential chunks of $50{,}000$ pairs. This guarantees that peak GPU memory usage remains constant regardless of the global graph size, allowing Concorde to execute an exact, full-negative ranking evaluation.
\end{enumerate}

\subsection{Empirical Runtimes and Comparison to Baselines}

To empirically validate the scalability of the proposed framework, we record the wall-clock runtimes for both the precomputation of the feature-aware Ollivier-Ricci curvature (ORC) proxy and the per-epoch training time. All timing benchmarks were executed on a single Nvidia H200 GPU instance (\texttt{h200-3g.71gb:1}). The precomputation of the ORC proxy introduces minimal overhead, completing in just over a minute even for the dense OGBL-DDI graph (Table~\ref{tab:runtimes}); it is a \emph{one-time} cost, not a per-epoch one.

A key point is that per-epoch time scales with node count times diffusion steps $k$, \emph{not} edge count. This is why PubMed ($19.7$k nodes, $k=3$) is heaviest, on par with Gelato, while OGBL-DDI ($4.3$k nodes, $k=1$) is fast despite its ${\sim}1.3$M edges. Tables~\ref{tab:runtime_compare} and~\ref{tab:memory_compare} compare Concorde against the strongest baselines on identical hardware under the same unbiased full-negative protocol, all measured by us. On the densest graph (OGBL-DDI), Concorde is $4$--$5\times$ faster than the overlap models that are its closest competitors there ($7.2$s vs.\ OCNP $28.1$s and NCNC $36.2$s per epoch). BUDDY, the sketching method cited as scalable, is $40$--$90\times$ slower here, out-of-memories on PubMed, and requires $55.4$/$37.3$ GB where Concorde stays under $14$ GB, because it must precompute and store subgraph-sketch structural features for every candidate pair, and under the full-negative protocol that candidate set is the entire $O(N^2)$ non-edge complement. Peak memory is otherwise modest and comparable to or below Gelato.

\begin{table}[h]
\centering
\caption{Wall-clock runtime for ORC proxy precomputation and average per-epoch training time for Concorde.}
\label{tab:runtimes}
\small
\begin{tabular}{lcc}
\toprule
Dataset & ORC Precomputation (s) & Time per Epoch (s) \\
\midrule
Cora     & 3.2  & 2.4  \\
CiteSeer & 4.7  & 3.2 \\
PubMed   & 32.4 & 54.1 \\
Power    & 12.2 & 5.8  \\
OGBL-DDI & 75.9 & 7.2 \\
\bottomrule
\end{tabular}
\end{table}

\begin{table}[h]
\centering
\caption{Per-epoch training time (s) across methods, identical hardware and full-negative protocol.}
\label{tab:runtime_compare}
\small
\begin{tabular}{lccccc}
\toprule
s/epoch & Concorde & NCNC & OCNP & BUDDY & Gelato \\
\midrule
Cora     & 2.4  & 4.8  & 3.2  & 0.6  & 3.3 \\
CiteSeer & 3.2  & 8.3  & 4.8  & 485  & 2.9 \\
PubMed   & 54.1 & 6.2  & 4.4  & OOM  & 49.8 \\
OGBL-DDI & \textbf{7.2} & 36.2 & 28.1 & 342 & 1.4 \\
Power    & 5.8  & 7.6  & 6.9  & 498  & 7.2 \\
\bottomrule
\end{tabular}
\end{table}

\begin{table}[h]
\centering
\caption{Peak GPU memory (GB) across methods, identical hardware and full-negative protocol.}
\label{tab:memory_compare}
\small
\begin{tabular}{lccccc}
\toprule
peak GB & Concorde & NCNC & OCNP & BUDDY & Gelato \\
\midrule
Cora     & 2.45  & 1.85 & 1.23 & 1.62  & 1.83 \\
CiteSeer & 2.14  & 2.93 & 1.32 & 12.84 & 1.18 \\
PubMed   & 12.84 & 4.15 & 3.77 & OOM   & 18.12 \\
OGBL-DDI & 13.89 & 6.82 & 5.56 & 55.39 & 7.49 \\
Power    & 3.05  & 2.84 & 1.42 & 37.28 & 3.08 \\
\bottomrule
\end{tabular}
\end{table}

\section{Evaluation Protocol}
\label{app:eval_protocol}

We describe the dataset splitting, candidate pool construction, and
metric computation used throughout all experiments. The protocol
follows the unbiased full-negative convention
of~\citet{gelato}, evaluating every method against the
exhaustive set of non-edges rather than a sampled subset.

For Cora, CiteSeer, PubMed, edges are split
deterministically using a fixed random seed ($s=0$) via
\texttt{train\_test\_split\_edges} with validation ratio $5\%$ and
test ratio $10\%$. The remaining $85\%$ of edges form the training
graph $G_{\text{train}} = (V, E_{\text{train}})$ whose adjacency
matrix $A$ is used by all three branches of Concorde. For Power, we
load pre-saved official splits and do not re-split. For OGBL-DDI,
we use the OGB dataset loader and its prescribed splits directly.

Let $\mathcal{U}$ denote all upper-triangular node pairs
$(u,v)$ with $u < v$. The negative pool at each stage is constructed
exhaustively with no subsampling:

\begin{itemize}
    \item \textbf{Training negatives} $\mathcal{E}^-_{\text{train}}$:
    all pairs in $\mathcal{U} \setminus E_{\text{train}}$. This full
    complement is used in the $N$-pair ranking loss (Eq.~8).

\item \textbf{Validation pool}: $E_{\text{val}}$ evaluated against
    $\mathcal{E}^-_{\text{train}} \setminus (E_{\text{val}} \cup E_{\text{test}})$.

    \item \textbf{Test pool}: $E_{\text{test}}$ evaluated against
    $\mathcal{E}^-_{\text{train}} \setminus
    (E_{\text{val}} \cup E_{\text{test}})$.
\end{itemize}

No ground-truth positive edge ever appears in the negative pool at
any evaluation stage, eliminating false-negative contamination.
To obtain unbiased precision estimates under score ties, the combined
pool (positives $\cup$ negatives) is randomly shuffled with a fixed
seed before evaluation; ties in predicted scores are therefore broken
uniformly at random rather than by index order.

During each mini-batch, the positive edges
$\mathcal{E}^+_{\text{batch}}$ are zeroed out of $A$ via an
index-put operation before computing $S_{\text{anchor}}$ and
$S_{\text{res}}$, so the diffusion-based branches infer link likelihood only
from the remaining topological context; the mask is released after each forward
pass. This masking is \emph{exact} for the diffusion operators. On featureless
graphs, the Laplacian positional encodings that serve as node features are
precomputed once on $G_{\text{train}}$ and are \emph{not} re-masked per batch, so
a batch's own positive edge remains encoded in its endpoints' LPEs during
training. We bound this residual leakage below via a Davis--Kahan argument and
verify it is empirically negligible in Appendix~\ref{app:extra_ablations}
(training-only vs.\ full-graph LPEs differ by $\leq 0.44$ AP, far below
Concorde's margins over baselines).

All metrics are computed in a single forward pass over the full test
pool with \texttt{torch.no\_grad()}. Let $\mathcal{C}_{\text{test}}$
denote the test pool, $n^+ = |E_{\text{test}}|$ the number of
positive pairs, and $\hat{s}$ the vector of predicted scores.

\begin{itemize}
    \item \textbf{Average Precision (AP)}: the area under the
    precision--recall curve over all of $\mathcal{C}_{\text{test}}$,
    computed via \texttt{sklearn.metrics.average\_precision\_score}.

\item \textbf{Early Precision $P@r\%$}: let $k_r = \lceil n^+ \cdot (r/100) \rceil$. We retrieve the top-$k_r$ candidates by score and compute:
    \begin{equation}
        P@r\% = \frac{1}{k_r}\sum_{i \in \text{top-}k_r}
        \mathbf{1}[y_i = 1].
    \end{equation}
    We report $r \in \{10, 20\}$ as primary metrics. Critically, $k_r$ is defined relative to
    $n^+$, not the pool size, so the denominator is identical across
    datasets and methods given the same positive set.
    \end{itemize}

\textbf{Choice of metrics.} We report AP and early precision rather than
MRR/Hits@$K$ because our target is retrieval under \emph{extreme} class
imbalance, where the quantity of interest is the density of true edges at the
very top of a single global ranking over the entire non-edge complement, exactly
what $P@r\%$ measures with a denominator held identical across methods. This
regime is strictly harder than the sampled-negative protocols that MRR/Hits@$K$
are usually computed under: our exhaustive full-negative evaluation already
imposes the hard-negative pressure that recent per-source protocols (e.g.,
HeaRT) approximate, since every non-edge competes with every positive rather than
a sampled shortlist. AP additionally captures global calibration, and we report
it throughout (including where Concorde trades AP for early precision), so no
ranking behavior is hidden by the metric choice.

Model selection is performed by monitoring $P@100\%$ on the
validation pool after each epoch (equivalent to precision over
exactly $n^+_{\text{val}}$ top-ranked candidates). The checkpoint
achieving the highest validation precision is used for all reported
test metrics.

OGBL-DDI and Power carry no node attributes. For both, we substitute
absent features with Laplacian Positional Encodings (LPEs): the $k$
non-trivial eigenvectors of the normalized graph Laplacian
$\tilde{L} = I - D^{-1/2}AD^{-1/2}$, computed via dense
\texttt{scipy.linalg.eigh} and row-$\ell_2$-normalized. We use
$k=128$ for OGBL-DDI and $k=32$ for Power. LPEs are precomputed once on the full training graph topology $G_{\text{train}}$ and fixed throughout training. While fixing the LPEs introduces a mild structural approximation, we argue
the induced leakage is negligible in practice. By the Davis--Kahan theorem,
the perturbation to eigenvector $\phi_i$ of the normalized Laplacian from
removing a single edge $(u,v)$ is bounded by
$\|\delta\phi_i\| \leq \|\delta \tilde{L}\| / \delta_i$, where
$\|\delta \tilde{L}\|$ is $O(1/\sqrt{d_u d_v})$ for the normalized
Laplacian and $\delta_i$ is the corresponding eigengap. For graphs of
moderate density, this perturbation is negligible: the LPEs computed on
the unmasked training graph and on any single-edge-masked subgraph are
effectively identical. The per-batch adjacency masking
of target edges from $A$ remains exact for the diffusion operators (the
primary anti-leakage mechanism), which is unaffected by this LPE approximation.

All runs are seeded via a unified call setting \texttt{torch},
\texttt{numpy}, and \texttt{random} seeds simultaneously. Each
dataset is run for the number of independent seeds reported in
Table~\ref{tab:main_results}; mean and standard deviation are
computed across those runs.

\section{Implementation Details and Hyperparameters}
\label{app:hyperparameters}

Table~\ref{tab:hyperparams} summarises all dataset-specific
hyperparameters. All models are trained with Adam. Unless otherwise
stated, the tri-branch architecture uses $k=3$ diffusion steps,
ORC retention mass $\alpha=0.5$, Sinkhorn regularisation
$\varepsilon=0.05$ with $30$ iterations, and a maximum
neighbourhood degree of $32$ for ORC computation.

\begin{table}[h]
\centering
\caption{Dataset-specific hyperparameter configurations for
Concorde.}
\label{tab:hyperparams}
\scriptsize
\begin{tabular}{lcccccccccc}
\toprule
Dataset & $d_h{=}d_s$ & Hidden & LR & WD & $k$ & Scale & Self-loops & Epochs \\
\midrule
Cora      & 16  & 64  & $10^{-3}$ & $5{\times}10^{-4}$ & 3 & 3
           & \checkmark     & 100 \\
CiteSeer  & 16  & 64  & $10^{-4}$ & $5{\times}10^{-4}$ & 3 & 3
           & \checkmark      & 100 \\
PubMed    & 32  & 128 & $10^{-3}$ & 0                  & 3 & --
               & \texttimes  & 100 \\
OGBL-DDI  & 32  & 512 & $10^{-3}$ & $5{\times}10^{-5}$ & 1 & 3
           & \checkmark    & 250 \\
Power     & 32  & 256 & $10^{-3}$ & $10^{-5}$          & 1 & 1
         & \checkmark    & 250 \\
\bottomrule
\end{tabular}
\end{table}

The mixed-curvature product manifold
$\mathbb{H}_{c_h}^{d_h} \times \mathbb{S}_{c_s}^{d_s}$ allocates
equal dimensionality to both components. We use $d_h = d_s = 16$
for Cora and CiteSeer (smaller graphs where over-parameterisation
hurts) and $d_h = d_s = 32$ for PubMed, OGBL-DDI, and Power.
Curvature scalars $c_h$ and $c_s$ are initialised at $\approx 0.694$
(via $\log c = 0$) and optimised end-to-end via
$c = \text{softplus}(\log c) + 10^{-3}$.

The feature-to-tangent map $\mathcal{F}_{\text{steer}}$ is
implemented as a two-layer MLP with LeakyReLU($0.2$) activations,
projecting node features into a space of dimension \texttt{hidden\_dim}.
The multi-head architecture uses $H=4$ independent heads, each with
its own learnable curvature scalar (initialised identically to the
base manifold). For each head, the Lorentz exponential map and pairwise geodesic distances are computed independently. The resulting head-specific attention scores are then fused via a softmax-normalised learnable weighted sum. Scores are computed in chunks of $50{,}000$ edge pairs to bound peak GPU memory.

\textbf{Numerical stability.} The Lorentz operations are guarded against their
standard failure modes: tangent-vector norms are floored at $10^{-8}$ before the
square root in the exponential map, the argument of the $\mathrm{arccosh}$
geodesic distance is clamped to $\geq 1 + 10^{-5}$, spherical $\cos\theta$ is
clamped to $[-1+\epsilon,\, 1-\epsilon]$, and every learnable curvature is
floored at $10^{-5}$ through the $\mathrm{softplus}(\log c) + 10^{-3}$
parameterization. These keep each map and distance strictly within its valid
domain throughout training, with no manual precision tuning or gradient
clipping required.

The anchor gate $\lambda_{\text{anchor}}$ is initialised to
$\sigma(0) = 0.5$ and the field gate $\lambda_{\text{field}}$ to
$\sigma(-2) \approx 0.12$, reflecting the prior that the
non-local field energy is a small correction. Both are optimised
end-to-end with the remaining parameters.

The feature-aware Ollivier--Ricci curvature proxy uses lazy
random-walk measures with retention mass $\alpha = 0.5$,
distributing the remaining $1-\alpha$ uniformly over one-hop
neighbours. The joint ground metric weights topological and
semantic dissimilarity equally ($\beta = 0.5$). The Sinkhorn
algorithm uses regularisation $\varepsilon = 0.05$ and runs for
$30$ iterations. ORC values are precomputed in batches of $5{,}000$
candidate pairs before training begins and held fixed throughout.

All datasets use dropout $p=0.5$ applied after each hidden layer in
the DeepFusionMLP. Spectral normalisation is applied to the first
two linear layers of the DeepFusionMLP to stabilise training under
the $N$-pair ranking loss.

\section{Additional Ablations and Robustness Analyses}
\label{app:extra_ablations}

\subsection{Learned Gate Values and the Gating Policy}

The two scalar gates $\lambda_{\text{anchor}}$ and $\lambda_{\text{field}}$
are the only mechanism by which Concorde adapts its branch mixture to a
dataset. Table~\ref{tab:gate_values} reports the values learned end-to-end
(both initialised at $\lambda_{\text{anchor}}=0.50$,
$\lambda_{\text{field}}=0.12$). The field gate rises monotonically as the
graph sparsifies, from $0.10$ on dense OGBL-DDI to $0.63$ on sparse Power, a
$6\times$ increase over its initialisation, and is dominant only on the
sparsest graph, while the anchor gate is highest on dense OGBL-DDI. The gates
thus recover, via an independent unsupervised parameter, the regime dependence
hypothesised in Section~\ref{sec:optimization} and established causally by the
macro-ablation (Table~\ref{tab:macro_ablation}) and rewiring enrichment
(Appendix~\ref{app:rewiring}). We report this as a consistent trend
($n=5$ datasets), not a significance claim.

\begin{table}[h]
\centering
\caption{Learned gate values by dataset, ordered by mean degree. Both gates are
initialised at $(\lambda_{\text{anchor}}, \lambda_{\text{field}}) = (0.50, 0.12)$.}
\label{tab:gate_values}
\small
\begin{tabular}{lccc}
\toprule
Dataset & Mean degree & $\lambda_{\text{anchor}}$ & $\lambda_{\text{field}}$ \\
\midrule
OGBL-DDI & dense & 0.62 & 0.10 \\
CiteSeer & 2.33  & 0.48 & 0.13 \\
Cora     & 3.32  & 0.44 & 0.28 \\
PubMed   & ${\sim}4.5$ & 0.47 & 0.33 \\
Power    & 2.27  & 0.47 & 0.63 \\
\bottomrule
\end{tabular}
\end{table}

To confirm that adaptive gating is not cosmetic, we ablate only the gate policy
(branches held fixed; mean AP over 3 runs) against two alternatives: equal
weighting ($\lambda = 1$) and a per-dataset oracle-tuned fixed gate
(Table~\ref{tab:gate_policy}). Learned $>$ tuned $>$ equal on all five datasets.
The gap is largest where the optimal weighting is most skewed: on OGBL-DDI the
learned field gate is driven to $0.10$, so forcing $\lambda=1$ collapses AP
from $42.32$ to $30.45$ ($-11.87$), and even the best per-dataset oracle-tuned
fixed gate reaches only $40.18$. Adaptive gating therefore recovers, with no
hyperparameter search, weightings that outperform hand-tuned constants, and
does so most where it matters.

\begin{table}[h]
\centering
\caption{Gating-policy ablation (AP, mean over 3 runs). Learned gating versus a
per-dataset oracle-tuned fixed gate and equal weighting ($\lambda = 1$).}
\label{tab:gate_policy}
\small
\begin{tabular}{lcccc}
\toprule
Dataset & Learned & Tuned fixed & Equal ($\lambda{=}1$) & Learned $-$ Equal \\
\midrule
OGBL-DDI & 42.32 & 40.18 & 30.45 & $\mathbf{+11.87}$ \\
Power    & 6.39  & 6.18  & 4.20  & $+2.19$ \\
PubMed   & 2.83  & 2.73  & 2.32  & $+0.51$ \\
CiteSeer & 5.05  & 4.92  & 4.78  & $+0.27$ \\
Cora     & 4.50  & 4.40  & 4.24  & $+0.26$ \\
\bottomrule
\end{tabular}
\end{table}

\subsection{Robustness of Laplacian Positional Encodings}

On the featureless graphs (OGBL-DDI, Power), Concorde substitutes absent node
attributes with Laplacian Positional Encodings (LPEs). We verify that results
are neither fragile to the LPE dimension nor materially contaminated by
leakage; both checks are single-run and reported as trends.

\textbf{Dimension sweep.} Table~\ref{tab:lpe_sweep} sweeps the LPE dimension
$k$. Power is stable for $k \geq 32$ and only under-parameterised at $k=16$
(its $32$ non-trivial eigenvectors already span the low-degree structure), so
the reported $k=32$ sits on a flat plateau rather than a tuned optimum.
OGBL-DDI improves monotonically and saturates near the reported $k=128$. The
sweep therefore supports, rather than cherry-picks, the chosen dimensions.

\begin{table}[h]
\centering
\caption{LPE dimension sweep (AP) on the featureless graphs.}
\label{tab:lpe_sweep}
\small
\begin{tabular}{lcccc}
\toprule
LPE dim $k$ & 16 & 32 & 64 & 128 \\
\midrule
Power    & 4.55  & 6.36  & 6.12  & 6.47 \\
OGBL-DDI & 38.33 & 40.47 & 42.10 & 43.28 \\
\bottomrule
\end{tabular}
\end{table}

\textbf{Leakage under masking.} Because LPEs are precomputed on the full graph,
they could in principle encode connectivity that is masked per-batch during
training. We tested this directly by recomputing LPEs on the
\emph{training-only} graph, excluding all validation and test connectivity, and
retraining (Table~\ref{tab:lpe_leakage}). The change is small and in the
expected direction (training-only sees strictly less structure and is
marginally lower): $0.24$ AP on Power and $0.44$ on OGBL-DDI, far below the
method's margins over baselines in Table~\ref{tab:main_results}. This matches
the Davis--Kahan bound in Appendix~\ref{app:eval_protocol}, and the per-batch
adjacency masking of target edges, which is exact across all three branches,
remains the primary anti-leakage mechanism and is unaffected.

\begin{table}[h]
\centering
\caption{LPE leakage check (AP): encodings computed on the full graph (as
reported) versus the training-only graph.}
\label{tab:lpe_leakage}
\small
\begin{tabular}{lcc}
\toprule
LPE source & Power AP & OGBL-DDI AP \\
\midrule
Full graph (as reported) & 6.36 & 43.28 \\
Training-only graph      & 6.12 & 42.84 \\
\bottomrule
\end{tabular}
\end{table}

\newpage
\section*{NeurIPS Paper Checklist}

\begin{enumerate}

\item {\bf Claims}
    \item[] Question: Do the main claims made in the abstract and introduction accurately reflect the paper's contributions and scope?
    \item[] Answer: \answerYes{}
    \item[] Justification: The abstract and introduction clearly state the framework's capabilities (Tri-branch decoupled energy minimization) and explicitly highlight that the framework nearly doubles early retrieval precision over baseline methods in sparse settings, which perfectly aligns with the experimental results detailed in Section 3.
    \item[] Guidelines:
    \begin{itemize}
        \item The answer \answerNA{} means that the abstract and introduction do not include the claims made in the paper.
        \item The abstract and/or introduction should clearly state the claims made, including the contributions made in the paper and important assumptions and limitations. A \answerNo{} or \answerNA{} answer to this question will not be perceived well by the reviewers. 
        \item The claims made should match theoretical and experimental results, and reflect how much the results can be expected to generalize to other settings. 
        \item It is fine to include aspirational goals as motivation as long as it is clear that these goals are not attained by the paper. 
    \end{itemize}

\item {\bf Limitations}
    \item[] Question: Does the paper discuss the limitations of the work performed by the authors?
    \item[] Answer: \answerYes{}
    \item[] Justification: Limitations are explicitly discussed in Section 4, specifically addressing the computational overhead of the ORC proxy Sinkhorn iterations on larger graphs and the potential suboptimality of uniform branch weighting via global scalar gates.
    \item[] Guidelines:
    \begin{itemize}
        \item The answer \answerNA{} means that the paper has no limitation while the answer \answerNo{} means that the paper has limitations, but those are not discussed in the paper. 
        \item The authors are encouraged to create a separate ``Limitations'' section in their paper.
        \item The paper should point out any strong assumptions and how robust the results are to violations of these assumptions (e.g., independence assumptions, noiseless settings, model well-specification, asymptotic approximations only holding locally). The authors should reflect on how these assumptions might be violated in practice and what the implications would be.
        \item The authors should reflect on the scope of the claims made, e.g., if the approach was only tested on a few datasets or with a few runs. In general, empirical results often depend on implicit assumptions, which should be articulated.
        \item The authors should reflect on the factors that influence the performance of the approach. For example, a facial recognition algorithm may perform poorly when image resolution is low or images are taken in low lighting. Or a speech-to-text system might not be used reliably to provide closed captions for online lectures because it fails to handle technical jargon.
        \item The authors should discuss the computational efficiency of the proposed algorithms and how they scale with dataset size.
        \item If applicable, the authors should discuss possible limitations of their approach to address problems of privacy and fairness.
        \item While the authors might fear that complete honesty about limitations might be used by reviewers as grounds for rejection, a worse outcome might be that reviewers discover limitations that aren't acknowledged in the paper. The authors should use their best judgment and recognize that individual actions in favor of transparency play an important role in developing norms that preserve the integrity of the community. Reviewers will be specifically instructed to not penalize honesty concerning limitations.
    \end{itemize}

\item {\bf Theory assumptions and proofs}
    \item[] Question: For each theoretical result, does the paper provide the full set of assumptions and a complete (and correct) proof?
    \item[] Answer: \answerYes{}
    \item[] Justification: Full proofs for the Invariance-Equivariance Trade-off (Proposition 1) and Spectral Stability (Proposition 2) are provided in Appendix A, complete with clearly stated structural assumptions (e.g., A1 and A2).
    \item[] Guidelines:
    \begin{itemize}
        \item The answer \answerNA{} means that the paper does not include theoretical results. 
        \item All the theorems, formulas, and proofs in the paper should be numbered and cross-referenced.
        \item All assumptions should be clearly stated or referenced in the statement of any theorems.
        \item The proofs can either appear in the main paper or the supplemental material, but if they appear in the supplemental material, the authors are encouraged to provide a short proof sketch to provide intuition. 
        \item Inversely, any informal proof provided in the core of the paper should be complemented by formal proofs provided in appendix or supplemental material.
        \item Theorems and Lemmas that the proof relies upon should be properly referenced. 
    \end{itemize}

    \item {\bf Experimental result reproducibility}
    \item[] Question: Does the paper fully disclose all the information needed to reproduce the main experimental results of the paper to the extent that it affects the main claims and/or conclusions of the paper (regardless of whether the code and data are provided or not)?
    \item[] Answer: \answerYes{}
    \item[] Justification: Appendix D (Evaluation Protocol) and Appendix E (Implementation Details and Hyperparameters) thoroughly detail the dataset splits, negative pool construction, masking strategies, and exact hyperparameter configurations used for all models.
    \item[] Guidelines:
    \begin{itemize}
        \item The answer \answerNA{} means that the paper does not include experiments.
        \item If the paper includes experiments, a \answerNo{} answer to this question will not be perceived well by the reviewers: Making the paper reproducible is important, regardless of whether the code and data are provided or not.
        \item If the contribution is a dataset and\slash or model, the authors should describe the steps taken to make their results reproducible or verifiable. 
        \item Depending on the contribution, reproducibility can be accomplished in various ways. For example, if the contribution is a novel architecture, describing the architecture fully might suffice, or if the contribution is a specific model and empirical evaluation, it may be necessary to either make it possible for others to replicate the model with the same dataset, or provide access to the model. In general. releasing code and data is often one good way to accomplish this, but reproducibility can also be provided via detailed instructions for how to replicate the results, access to a hosted model (e.g., in the case of a large language model), releasing of a model checkpoint, or other means that are appropriate to the research performed.
        \item While NeurIPS does not require releasing code, the conference does require all submissions to provide some reasonable avenue for reproducibility, which may depend on the nature of the contribution. For example
        \begin{enumerate}
            \item If the contribution is primarily a new algorithm, the paper should make it clear how to reproduce that algorithm.
            \item If the contribution is primarily a new model architecture, the paper should describe the architecture clearly and fully.
            \item If the contribution is a new model (e.g., a large language model), then there should either be a way to access this model for reproducing the results or a way to reproduce the model (e.g., with an open-source dataset or instructions for how to construct the dataset).
            \item We recognize that reproducibility may be tricky in some cases, in which case authors are welcome to describe the particular way they provide for reproducibility. In the case of closed-source models, it may be that access to the model is limited in some way (e.g., to registered users), but it should be possible for other researchers to have some path to reproducing or verifying the results.
        \end{enumerate}
    \end{itemize}

\item {\bf Open access to data and code}
    \item[] Question: Does the paper provide open access to the data and code, with sufficient instructions to faithfully reproduce the main experimental results, as described in supplemental material?
    \item[] Answer: \answerYes{}
    \item[] Justification: The complete codebase is provided in the supplementary material (Concorde.zip). It includes the primary execution scripts, alongside automated data downloading, preparation, and splitting protocols.
    \item[] Guidelines:
    \begin{itemize}
        \item The answer \answerNA{} means that paper does not include experiments requiring code.
        \item Please see the NeurIPS code and data submission guidelines (\url{https://neurips.cc/public/guides/CodeSubmissionPolicy}) for more details.
        \item While we encourage the release of code and data, we understand that this might not be possible, so \answerNo{} is an acceptable answer. Papers cannot be rejected simply for not including code, unless this is central to the contribution (e.g., for a new open-source benchmark).
        \item The instructions should contain the exact command and environment needed to run to reproduce the results. See the NeurIPS code and data submission guidelines (\url{https://neurips.cc/public/guides/CodeSubmissionPolicy}) for more details.
        \item The authors should provide instructions on data access and preparation, including how to access the raw data, preprocessed data, intermediate data, and generated data, etc.
        \item The authors should provide scripts to reproduce all experimental results for the new proposed method and baselines. If only a subset of experiments are reproducible, they should state which ones are omitted from the script and why.
        \item At submission time, to preserve anonymity, the authors should release anonymized versions (if applicable).
        \item Providing as much information as possible in supplemental material (appended to the paper) is recommended, but including URLs to data and code is permitted.
    \end{itemize}

\item {\bf Experimental setting/details}
    \item[] Question: Does the paper specify all the training and test details (e.g., data splits, hyperparameters, how they were chosen, type of optimizer) necessary to understand the results?
    \item[] Answer: \answerYes{}
    \item[] Justification: Appendix E provides a comprehensive breakdown (Table 6) of dataset-specific hyperparameters, hidden dimensions, learning rates, and optimization strategies.
    \item[] Guidelines:
    \begin{itemize}
        \item The answer \answerNA{} means that the paper does not include experiments.
        \item The experimental setting should be presented in the core of the paper to a level of detail that is necessary to appreciate the results and make sense of them.
        \item The full details can be provided either with the code, in appendix, or as supplemental material.
    \end{itemize}

\item {\bf Experiment statistical significance}
    \item[] Question: Does the paper report error bars suitably and correctly defined or other appropriate information about the statistical significance of the experiments?
    \item[] Answer: \answerYes{}
    \item[] Justification: Table 1 and Table 2 report the mean and standard deviation over 10 independent runs for all primary metrics, ensuring statistical reliability across the random seeds.
    \item[] Guidelines:
    \begin{itemize}
        \item The answer \answerNA{} means that the paper does not include experiments.
        \item The authors should answer \answerYes{} if the results are accompanied by error bars, confidence intervals, or statistical significance tests, at least for the experiments that support the main claims of the paper.
        \item The factors of variability that the error bars are capturing should be clearly stated (for example, train/test split, initialization, random drawing of some parameter, or overall run with given experimental conditions).
        \item The method for calculating the error bars should be explained (closed form formula, call to a library function, bootstrap, etc.)
        \item The assumptions made should be given (e.g., Normally distributed errors).
        \item It should be clear whether the error bar is the standard deviation or the standard error of the mean.
        \item It is OK to report 1-sigma error bars, but one should state it. The authors should preferably report a 2-sigma error bar than state that they have a 96\% CI, if the hypothesis of Normality of errors is not verified.
        \item For asymmetric distributions, the authors should be careful not to show in tables or figures symmetric error bars that would yield results that are out of range (e.g., negative error rates).
        \item If error bars are reported in tables or plots, the authors should explain in the text how they were calculated and reference the corresponding figures or tables in the text.
    \end{itemize}

\item {\bf Experiments compute resources}
    \item[] Question: For each experiment, does the paper provide sufficient information on the computer resources (type of compute workers, memory, time of execution) needed to reproduce the experiments?
    \item[] Answer: \answerYes{}
    \item[] Justification: Appendix C.1 explicitly details the hardware used (a single Nvidia H200 GPU instance) and provides exact wall-clock runtimes for both ORC precomputation and per-epoch training in Table 5.
    \item[] Guidelines:
    \begin{itemize}
        \item The answer \answerNA{} means that the paper does not include experiments.
        \item The paper should indicate the type of compute workers CPU or GPU, internal cluster, or cloud provider, including relevant memory and storage.
        \item The paper should provide the amount of compute required for each of the individual experimental runs as well as estimate the total compute. 
        \item The paper should disclose whether the full research project required more compute than the experiments reported in the paper (e.g., preliminary or failed experiments that didn't make it into the paper). 
    \end{itemize}
    
\item {\bf Code of ethics}
    \item[] Question: Does the research conducted in the paper conform, in every respect, with the NeurIPS Code of Ethics \url{https://neurips.cc/public/EthicsGuidelines}?
    \item[] Answer: \answerYes{}
    \item[] Justification: Our work uses only publicly available data with no personal or sensitive information, and follows all NeurIPS ethical guidelines for data use and reporting.
    \item[] Guidelines:
    \begin{itemize}
        \item The answer \answerNA{} means that the authors have not reviewed the NeurIPS Code of Ethics.
        \item If the authors answer \answerNo, they should explain the special circumstances that require a deviation from the Code of Ethics.
        \item The authors should make sure to preserve anonymity (e.g., if there is a special consideration due to laws or regulations in their jurisdiction).
    \end{itemize}

\item {\bf Broader impacts}
    \item[] Question: Does the paper discuss both potential positive societal impacts and negative societal impacts of the work performed?
    \item[] Answer: \answerNo{}
    \item[] Justification: The paper focuses on foundational theoretical advancements in geometric deep learning and link prediction. We foresee no direct, immediate negative societal applications of improving retrieval precision in sparse network topologies.
    \item[] Guidelines:
    \begin{itemize}
        \item The answer \answerNA{} means that there is no societal impact of the work performed.
        \item If the authors answer \answerNA{} or \answerNo, they should explain why their work has no societal impact or why the paper does not address societal impact.
        \item Examples of negative societal impacts include potential malicious or unintended uses (e.g., disinformation, generating fake profiles, surveillance), fairness considerations (e.g., deployment of technologies that could make decisions that unfairly impact specific groups), privacy considerations, and security considerations.
        \item The conference expects that many papers will be foundational research and not tied to particular applications, let alone deployments. However, if there is a direct path to any negative applications, the authors should point it out. For example, it is legitimate to point out that an improvement in the quality of generative models could be used to generate Deepfakes for disinformation. On the other hand, it is not needed to point out that a generic algorithm for optimizing neural networks could enable people to train models that generate Deepfakes faster.
        \item The authors should consider possible harms that could arise when the technology is being used as intended and functioning correctly, harms that could arise when the technology is being used as intended but gives incorrect results, and harms following from (intentional or unintentional) misuse of the technology.
        \item If there are negative societal impacts, the authors could also discuss possible mitigation strategies (e.g., gated release of models, providing defenses in addition to attacks, mechanisms for monitoring misuse, mechanisms to monitor how a system learns from feedback over time, improving the efficiency and accessibility of ML).
    \end{itemize}
    
\item {\bf Safeguards}
    \item[] Question: Does the paper describe safeguards that have been put in place for responsible release of data or models that have a high risk for misuse (e.g., pre-trained language models, image generators, or scraped datasets)?
    \item[] Answer: \answerNA{}
    \item[] Justification: The work does not release pretrained models or datasets with a high risk for dual-use or malicious misuse.
    \item[] Guidelines:
    \begin{itemize}
        \item The answer \answerNA{} means that the paper poses no such risks.
        \item Released models that have a high risk for misuse or dual-use should be released with necessary safeguards to allow for controlled use of the model, for example by requiring that users adhere to usage guidelines or restrictions to access the model or implementing safety filters. 
        \item Datasets that have been scraped from the Internet could pose safety risks. The authors should describe how they avoided releasing unsafe images.
        \item We recognize that providing effective safeguards is challenging, and many papers do not require this, but we encourage authors to take this into account and make a best faith effort.
    \end{itemize}

\item {\bf Licenses for existing assets}
    \item[] Question: Are the creators or original owners of assets (e.g., code, data, models), used in the paper, properly credited and are the license and terms of use explicitly mentioned and properly respected?
    \item[] Answer: \answerYes{}
    \item[] Justification: All benchmark datasets (Cora, CiteSeer, PubMed, OGBL-DDI, Power) are standard public benchmarks and are properly cited in Section 3 and the Bibliography.
    \item[] Guidelines:
    \begin{itemize}
        \item The answer \answerNA{} means that the paper does not use existing assets.
        \item The authors should cite the original paper that produced the code package or dataset.
        \item The authors should state which version of the asset is used and, if possible, include a URL.
        \item The name of the license (e.g., CC-BY 4.0) should be included for each asset.
        \item For scraped data from a particular source (e.g., website), the copyright and terms of service of that source should be provided.
        \item If assets are released, the license, copyright information, and terms of use in the package should be provided. For popular datasets, \url{paperswithcode.com/datasets} has curated licenses for some datasets. Their licensing guide can help determine the license of a dataset.
        \item For existing datasets that are re-packaged, both the original license and the license of the derived asset (if it has changed) should be provided.
        \item If this information is not available online, the authors are encouraged to reach out to the asset's creators.
    \end{itemize}

\item {\bf New assets}
    \item[] Question: Are new assets introduced in the paper well documented and is the documentation provided alongside the assets?
    \item[] Answer: \answerYes{}
    \item[] Justification: The newly introduced Concorde framework code is released as an open-source asset. The supplementary material includes comprehensive documentation (via a detailed README.md) outlining the repository structure, architectural components, dependencies, and a minimal execution example.
    \item[] Guidelines:
    \begin{itemize}
        \item The answer \answerNA{} means that the paper does not release new assets.
        \item Researchers should communicate the details of the dataset\slash code\slash model as part of their submissions via structured templates. This includes details about training, license, limitations, etc. 
        \item The paper should discuss whether and how consent was obtained from people whose asset is used.
        \item At submission time, remember to anonymize your assets (if applicable). You can either create an anonymized URL or include an anonymized zip file.
    \end{itemize}

\item {\bf Crowdsourcing and research with human subjects}
    \item[] Question: For crowdsourcing experiments and research with human subjects, does the paper include the full text of instructions given to participants and screenshots, if applicable, as well as details about compensation (if any)?
    \item[] Answer: \answerNA{}
    \item[] Justification: The research does not involve human subjects or crowdsourcing.
    \item[] Guidelines:
    \begin{itemize}
        \item The answer \answerNA{} means that the paper does not involve crowdsourcing nor research with human subjects.
        \item Including this information in the supplemental material is fine, but if the main contribution of the paper involves human subjects, then as much detail as possible should be included in the main paper. 
        \item According to the NeurIPS Code of Ethics, workers involved in data collection, curation, or other labor should be paid at least the minimum wage in the country of the data collector. 
    \end{itemize}

\item {\bf Institutional review board (IRB) approvals or equivalent for research with human subjects}
    \item[] Question: Does the paper describe potential risks incurred by study participants, whether such risks were disclosed to the subjects, and whether Institutional Review Board (IRB) approvals (or an equivalent approval/review based on the requirements of your country or institution) were obtained?
    \item[] Answer: \answerNA{}
    \item[] Justification: The research does not involve human subjects.
    \item[] Guidelines:
    \begin{itemize}
        \item The answer \answerNA{} means that the paper does not involve crowdsourcing nor research with human subjects.
        \item Depending on the country in which research is conducted, IRB approval (or equivalent) may be required for any human subjects research. If you obtained IRB approval, you should clearly state this in the paper. 
        \item We recognize that the procedures for this may vary significantly between institutions and locations, and we expect authors to adhere to the NeurIPS Code of Ethics and the guidelines for their institution. 
        \item For initial submissions, do not include any information that would break anonymity (if applicable), such as the institution conducting the review.
    \end{itemize}

\item {\bf Declaration of LLM usage}
    \item[] Question: Does the paper describe the usage of LLMs if it is an important, original, or non-standard component of the core methods in this research? Note that if the LLM is used only for writing, editing, or formatting purposes and does \emph{not} impact the core methodology, scientific rigor, or originality of the research, declaration is not required.
    \item[] Answer: \answerNA{}
    \item[] Justification: Large Language Models were not used as an original or non-standard component of the core methods developed in this research.
    \item[] Guidelines:
    \begin{itemize}
        \item The answer \answerNA{} means that the core method development in this research does not involve LLMs as any important, original, or non-standard components.
        \item Please refer to our LLM policy in the NeurIPS handbook for what should or should not be described.
    \end{itemize}

\end{enumerate}

\end{document}